\documentclass[default,iicol]{sn-jnl}

\usepackage{natbib}
\setcitestyle{authoryear,open={(},close={)}}
\usepackage{hyperref}
\usepackage{graphicx}
\usepackage{comment}
\usepackage{subcaption}
\usepackage{amsmath}
\usepackage{amssymb,amsthm}
\usepackage{mathtools}
\usepackage{booktabs}
\usepackage{algorithm, setspace}
\usepackage{algpseudocode, array, multirow}
\usepackage{caption}
\usepackage{xcolor}
\usepackage{extarrows}
\usepackage{enumitem}
\usepackage{rotating}
\usepackage{fixltx2e}
\usepackage{textcomp}

\definecolor{darkgreen}{rgb}{0.035, 0.412, 0.098}
\definecolor{mygreen}{rgb}{0.05, 0.71, 0.47}
\definecolor{revision}{rgb}{0.0, 0.0, 0.0}

\definecolor{R1}{RGB}{2, 106, 250}
\definecolor{R2}{RGB}{250, 184, 2}
\definecolor{R3}{RGB}{247, 40, 195}
\definecolor{AE}{RGB}{0, 168, 150}

\usepackage[capitalize]{cleveref}
\crefname{section}{Section}{Sections}
\crefname{table}{Table}{Tables}
\Crefname{equation}{Eq.}{Eqs.}
\Crefname{figure}{Figure}{Figures}

\newcommand{\norm}[1]{\left\lVert#1\right\rVert}
\renewcommand{\textuparrow}{$\uparrow$}
\renewcommand{\textdownarrow}{$\downarrow$}
\DeclareMathOperator*{\argmax}{arg\,max}
\DeclareMathOperator*{\argmin}{arg\,min}


\jyear{2022}%
\pagenumbering{gobble}
\theoremstyle{thmstyleone}%
\newtheorem{theorem}{Theorem}
%
\newtheorem{lemma}{Lemma}

\newenvironment{customthm}[1]
{\innercustomthm}
{\endinnercustomthm}

\theoremstyle{thmstyletwo}%

\theoremstyle{thmstylethree}%

\raggedbottom

\begin{document}

\title[Adversarial Coreset Selection for Efficient Robust Training]{Adversarial Coreset Selection for Efficient Robust Training}


\author*[1]{\fnm{Hadi} \sur{M.~Dolatabadi}}\email{h.dolatabadi@unimelb.edu.au}

\author[1]{\fnm{Sarah} \sur{M.~Erfani}}\email{sarah.erfani@unimelb.edu.au}

\author[1]{\fnm{Christopher} \sur{Leckie}}\email{caleckie@unimelb.edu.au}

\affil*[1]{\orgdiv{School of Computing and Information Systems}, \orgname{The University of Melbourne}, \orgaddress{Melbourne Connect, \street{700 Swanston Street}, \city{Carlton}, \postcode{3053}, \state{Victoria}, \country{Australia}}}

\abstract{
It has been shown that neural networks are vulnerable to adversarial attacks: adding well-crafted, imperceptible perturbations to their input can modify their output.
Adversarial training is one of the most effective approaches to training robust models against such attacks.
Unfortunately, this method is much slower than vanilla training of neural networks since it needs to construct adversarial examples for the entire training data at every iteration.
By leveraging the theory of coreset selection, we show how selecting a small subset of training data provides a principled approach to reducing the time complexity of robust training.
To this end, we first provide convergence guarantees for adversarial coreset selection.
In particular, we show that the convergence bound is directly related to how well our coresets can approximate the gradient computed over the entire training data.
Motivated by our theoretical analysis, we propose using this gradient approximation error as our adversarial coreset selection objective to reduce the training set size effectively.
Once built, we run adversarial training over this subset of the training data.
Unlike existing methods, our approach can be adapted to a wide variety of training objectives, including TRADES, $\ell_p$-PGD, and Perceptual Adversarial Training.
We conduct extensive experiments to demonstrate that our approach speeds up adversarial training by 2-3 times while experiencing a slight degradation in the clean and robust accuracy.}

\keywords{adversarial training, coreset selection, efficient training, robust deep learning, image classification}

\maketitle
\section{Introduction}\label{sec:introduction}

Neural networks have achieved great success in the past decade.
Today, they are one of the primary candidates in solving a wide variety of machine learning tasks, from object detection and classification~\citep{he2016deep,wu2019detectron2} to photo-realistic image generation~\citep{karras2020stylegan2,vahdat2020nvae} and beyond.
Despite their impressive performance, neural networks are vulnerable to adversarial attacks~\citep{biggio2013evasion,szegedy2014intriguing}: adding well-crafted, imperceptible perturbations to their input can change their output.
This unexpected behavior of neural networks prevents their widespread deployment in safety-critical applications, including autonomous driving~\citep{eykholt2018robust} and medical diagnosis~\citep{ma2021understanding}.
As such, training robust neural networks against adversarial attacks is of paramount importance and has gained ample attention.

\textit{Adversarial training} is one of the most successful approaches in defending neural networks against adversarial attacks.
This approach first constructs a perturbed version of the training data.
Then, the neural network is optimized over these perturbed inputs instead of the clean samples.
This procedure must be done iteratively as the perturbations depend on the neural network weights.
Since the weights are optimized during training, the perturbations also need to be adjusted for each data sample in every iteration.\footnote{Note that adversarial training in the literature generally refers to a particular approach proposed by~\citet{madry2018towards}.
In this paper, we refer to any method that builds adversarial attacks around the training data and incorporates them into the training of the neural network as adversarial training. Using this taxonomy, methods such as TRADES~\citep{zhang2019trades}, $\ell_p$-PGD~\citep{madry2018towards} or Perceptual Adversarial Training~(PAT)~\citep{laidlaw2021pat} are all considered different versions of adversarial training.}

Various adversarial training methods primarily differ in how they define and find the perturbed version of the input~\citep{madry2018towards,zhang2019trades,laidlaw2021pat}.
However, they all require repetitive construction of these perturbations during training which is often cast as another non-linear optimization problem.
Therefore, the time/computational complexity of adversarial training is much higher than vanilla training.
In practice, neural networks require massive amounts of training data~\citep{adadi2021data} and need to be trained multiple times with various hyper-parameters to get their best performance~\citep{killamsetty2021gradmatch}.
Thus, reducing the time/computational complexity of adversarial training is critical to enabling the environmentally efficient application of robust neural networks in real-world scenarios~\citep{schwartz2020greenai,strubell2019energy}.

\textit{Fast Adversarial Training}~(FAT)~\citep{wong2020fast} is a successful approach proposed for efficient training of robust neural networks.
Contrary to the common belief that building the perturbed versions of the inputs using \textit{Fast Gradient Sign Method}~(FGSM)~\citep{goodfellow2014explaining} does not help in training arbitrary robust models~\citep{tramer2018ensemble,madry2018towards}, \citet{wong2020fast} show that by carefully applying uniformly random initialization before the FGSM step one can make this training approach work.
Using FGSM to generate the perturbed input in a single step combined with implementation tricks such as mixed precision and cyclic learning rate~\citep{smith2017cyclic}, FAT can significantly reduce the training time of robust neural networks.

Despite its success, FAT may exhibit unexpected behaviors in different settings.
For instance, it was shown that FAT suffers from \textit{catastrophic overfitting} where the robust accuracy during training suddenly drops to 0\%~\citep{wong2020fast,andriushchenko2020understanding}.
\textcolor{revision}{Another fundamental issue with FAT and its variations such as \texttt{GradAlign}~\citep{andriushchenko2020understanding} and \textit{N-FGSM}~\citep{aranda2022nfgsm} is that they are specifically designed and implemented for $\ell_\infty$ adversarial training.}
This is because FGSM, particularly an $\ell_\infty$ perturbation generator, is at the heart of these methods.
As a result, the quest for a unified approach that can reduce the time complexity of all types of adversarial training is not over.

Motivated by the limited scope of FAT, in this paper, we take an important step toward finding a general yet principled approach for reducing the time complexity of adversarial training.
We notice that repetitive construction of adversarial examples for each data point is the main bottleneck of robust training.
While this needs to be done iteratively, we speculate that perhaps we can find a subset of the training data that is more important to robust network optimization than the rest.
Specifically, we ask the following research question: \textit{Can we train an adversarially robust neural network using a subset of the entire training data without sacrificing clean or robust accuracy?}

\begin{figure*}[t!]
	\centering
	\begin{subfigure}{\textwidth}
		\centering
		\includegraphics[width=1.0\textwidth]{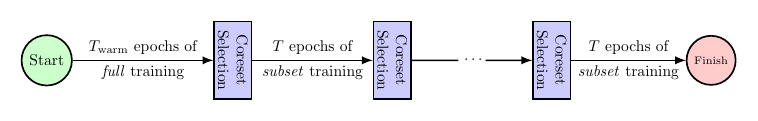}
		\caption{Selection is done every $T$ epochs. During the next episodes, the network is only trained on this subset.}
	\end{subfigure}\\\vspace{1em}
	\begin{subfigure}{0.44\textwidth}
		\centering
        \includegraphics[width=0.75\textwidth]{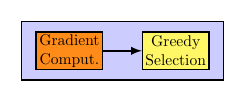}
		\caption{\label{fig:coresets_idea_van} Coreset selection module for vanilla training.}
	\end{subfigure}\hspace{1em}
	\begin{subfigure}{0.48\textwidth}
		\centering
		\includegraphics[width=1.0\textwidth]{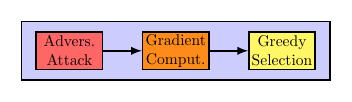}
		\caption{\label{fig:coresets_idea_adv} Coreset selection module for adversarial training.}
	\end{subfigure}
	\caption{\label{fig:coresets_idea} Overview of neural network training using coreset selection. Contrary to vanilla coreset selection, in our adversarial version we first need to construct adversarial examples and then perform coreset selection.}
\end{figure*}

This paper shows that the answer to this question is affirmative:
by selecting a \textit{weighted} subset of the data based on the neural network state, we run \textit{weighted} adversarial training only on this selected subset.
To achieve this goal, we first theoretically analyze adversarial subset selection convergence under gradient descent for a few idealistic settings.
Our study demonstrates that the convergence bound is directly related to the capability of the \textit{weighted} subset in approximating the loss gradient over the entire training set.
Motivated by this analysis, we propose using the gradient approximation error as our adversarial coreset selection objective for training robust neural networks.
We then draw an elegant connection between adversarial training and vanilla coreset selection algorithms.
In particular, we use Danskin's theorem and demonstrate how the entire training data can effectively be approximated with an informative weighted subset.
To conduct this selection, our study shows that one needs to build adversarial examples for the entire training data and solve a respective subset selection objective.
Afterward, training can be performed on this selected subset of the training data.
In our approach, shown in \Cref{fig:coresets_idea}, adversarial coreset selection is only required every few epochs, effectively reducing the time complexity of robust training algorithms.
We demonstrate how our proposed approach can be used as a general framework in conjunction with different adversarial training objectives, opening the door to a more principled approach for efficient training of robust neural networks in a general setting.
Our experimental results show that one can reduce the time complexity of various robust training objectives by 2-3 times without sacrificing too much clean and robust accuracy.

In summary, we make the following contributions:
\begin{itemize}\setlength\itemsep{0.25em}
	\item We propose a practical yet principled algorithm for efficient training of robust neural networks based on adaptive coreset selection. To the best of our knowledge, we are the first to use coreset selection in adversarial training.
    \item We provide theoretical guarantees for the convergence of our adversarial coreset selection algorithm under different settings.
	\item Based on our theoretical study, we develop adversarial coreset selection for neural networks and show that our approach can be applied to a variety of robust learning objectives, including TRADES~\citep{zhang2019trades}, $\ell_p$-PGD~\citep{madry2018towards} and Perceptual~\citep{laidlaw2021pat} Adversarial Training. Our approach encompasses a broader range of robust training objectives compared to the limited scope of the existing methods.
	\item Our experiments demonstrate that the proposed approach can result in a 2-3 fold reduction of the training time in adversarial training, with only a slight reduction in the clean and robust accuracy.
\end{itemize}

The rest of this paper is organized as follows.
In \Cref{sec:background}, we go over the preliminaries of our work and review the related work.
We then propose our approach in \Cref{sec:proposed_method}.
Next, we present and discuss our experimental results in \Cref{sec:experiments}.
Finally, we conclude the paper in \Cref{sec:conclusion}.

\section{Preliminaries}\label{sec:background}

In this section, we review the related background to our work.

\subsection{Adversarial Training}\label{sec:sec:adversarial_training}
Let $\mathcal{D}=\left\{\left(\boldsymbol{x}_{i}, y_{i}\right)\right\}_{i=1}^{n} \subset  \mathcal{X} \times \mathcal{Y}$ denote a training dataset consisting of $n$ i.i.d.~samples.
Each data point contains an input data $\boldsymbol{x}_{i}$ from domain $\mathcal{X}$ and an associated label $y_{i}$ taking one of $k$ possible values ${\mathcal{Y}=\left[k\right]=\left\{1, 2, \dots, k\right\}}$.
Without loss of generality, in this paper we focus on the image domain $\mathcal{X}$.
Furthermore, assume that ${f_{\boldsymbol{\theta}}: \mathcal{X} \rightarrow \mathbb{R}^{k}}$ denotes a neural network classifier with parameters $\boldsymbol{\theta}$ that takes $\boldsymbol{x} \in \mathcal{X}$ as input and maps it to a logit value $f_{\boldsymbol{\theta}}(\boldsymbol{x}) \in \mathbb{R}^{k}$.
Then, training a neural network in its most general format can be written as the following minimization problem:
\begin{equation}\label{eq:nn_training_objective}
\min_{\boldsymbol{\theta}} \sum_{i \in \mathcal{V}} \boldsymbol{\Phi} \left(f_{\boldsymbol{\theta}}; \boldsymbol{x}_{i}, y_{i}\right),
\end{equation}
Here, $\boldsymbol{\Phi} \left(f_{\boldsymbol{\theta}}; \boldsymbol{x}, y\right)$ is a function that takes a data point $\left(\boldsymbol{x}, y\right)$ and a function $f_{\boldsymbol{\theta}}$ as its inputs, and its output is a measure of discrepancy between the input $\boldsymbol{x}$ and its ground-truth label~$y$.
Also, $\mathcal{V}=\left[n\right]=\left\{1, 2, \dots, n\right\}$ denotes the entire training data indices.
By writing the training objective in this format, we can denote both vanilla and robust adversarial training using the same notation.
Below we show how various choices of the function $\boldsymbol{\Phi}$ amount to different training objectives.

\subsubsection{Vanilla Training}
In case of vanilla training, the function $\boldsymbol{\Phi}$ is a simple evaluation of an appropriate loss function over the neural network output $f_{\boldsymbol{\theta}}(\boldsymbol{x})$ and the ground-truth label $y$.
In other words, for vanilla training we have:
\begin{equation}\label{eq:vanilla_functional}
\boldsymbol{\Phi} \left(f_{\boldsymbol{\theta}}; \boldsymbol{x}, y\right) = \mathcal{L}_{\mathrm{CE}}\left(f_{\boldsymbol{\theta}}(\boldsymbol{x}), y\right),
\end{equation}
where $\mathcal{L}_{\mathrm{CE}}(\cdot, \cdot)$ is the cross-entropy loss.

\subsubsection{FGSM, $\ell_p$-PGD, and Perceptual Adversarial Training}
In adversarial training, the objective is itself an optimization problem:
\begin{equation}\label{eq:at_functional}
\boldsymbol{\Phi} \left(f_{\boldsymbol{\theta}}; \boldsymbol{x}, y\right) = \max_{\tilde{\boldsymbol{x}}} \mathcal{L}_{\mathrm{CE}}\left(f_{\boldsymbol{\theta}}(\tilde{\boldsymbol{x}}), y\right)~ \text{s.t.}~\mathrm{d}\left({\tilde{\boldsymbol{x}}, \boldsymbol{x}}\right)\leq \varepsilon
\end{equation}
where $\mathrm{d}(\cdot, \cdot)$~is an appropriate distance measure over image domain~$\mathcal{X}$, and $\varepsilon$ denotes a scalar.
The constraint over $\mathrm{d}\left({\tilde{\boldsymbol{x}}, \boldsymbol{x}}\right)$ is used to ensure visual similarity between $\tilde{\boldsymbol{x}}$ and $\boldsymbol{x}$.
It can be shown that solving \Cref{eq:at_functional} amounts to finding an adversarial example $\tilde{\boldsymbol{x}}$ for the clean sample $\boldsymbol{x}$~\citep{madry2018towards}.
Different choices of the visual similarity measure $\mathrm{d}(\cdot, \cdot)$ and solvers for \Cref{eq:at_functional} result in different adversarial training objectives:
\begin{itemize}
	\item FGSM~\citep{goodfellow2014explaining} assumes that ${\mathrm{d}({\tilde{\boldsymbol{x}}, \boldsymbol{x}}) = \norm{{\tilde{\boldsymbol{x}}-\boldsymbol{x}}}_{\infty}}$.
	Using this $\ell_\infty$ assumption, the solution to \Cref{eq:at_functional} is computed using one iteration of gradient ascent.
	\item $\ell_p$-PGD~\citep{madry2018towards} utilizes $\ell_p$ norms as a proxy for visual similarity $\mathrm{d}(\cdot, \cdot)$. Then, several steps of projected gradient ascent is taken to solve \Cref{eq:at_functional}.
	\item Finally, Perceptual Adversarial Training~(PAT)~\citep{laidlaw2021pat} uses \textit{Learned Perceptual Image Patch Similarity}~(LPIPS)~\citep{zhang2018lpips} as its distance measure.
	\citet{laidlaw2021pat} propose to solve the inner maximization of~\Cref{eq:at_functional} objective using either projected gradient ascent or Lagrangian relaxation.
\end{itemize}

\subsubsection{TRADES Adversarial Training}
This approach uses a combination of \Cref{eq:at_functional,eq:vanilla_functional}.
The intuition behind TRADES~\citep{zhang2019trades} is creating a trade-off between clean and robust accuracy.
In particular, the objective is written as:
\begin{align}\label{eq:trades_functional}\nonumber
	\boldsymbol{\Phi} \left(\boldsymbol{x}, y; f_{\boldsymbol{\theta}}\right) =~ &\mathcal{L}_{\mathrm{CE}}\left(f_{\boldsymbol{\theta}}(\boldsymbol{x}), y\right) \\
	&+ \max_{\tilde{\boldsymbol{x}}} \mathcal{L}_{\mathrm{CE}}\left(f_{\boldsymbol{\theta}}(\tilde{\boldsymbol{x}}), f_{\boldsymbol{\theta}}(\boldsymbol{x})\right)/\lambda,
\end{align}
such that $\mathrm{d}=({\tilde{\boldsymbol{x}}, \boldsymbol{x}})\leq \varepsilon$.
Here, $\lambda$ is a regularization parameter that controls the trade-off.

\subsection{Coreset Selection}\label{sec:sec:coreset_selection}

\begin{figure*}[tb!]
    \centering
    \includegraphics[width=0.8\textwidth]{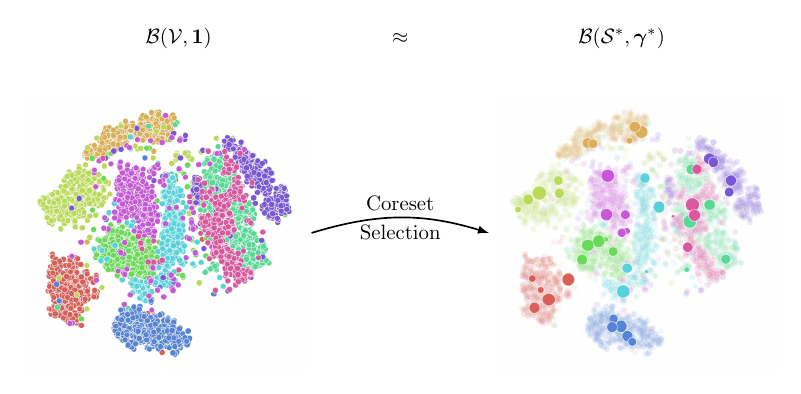}
    \caption{Coreset selection aims at finding a weighted subset of the data that can approximate certain behaviors of the entire data samples. In this figure, we denote the behavior of interest as a function $\mathcal{B}(\cdot, \cdot)$ that receives a set and its associated weights. The goal of coreset selection is to move from the original data $\mathcal{V}$ with uniform wights $\boldsymbol{1}$ to a weighted subset $\mathcal{S}^{*}$ with weights $\boldsymbol{\gamma}^{*}$.}
    \label{fig:coreset_selection}
\end{figure*}

\textit{Coreset selection}~(also referred to as adaptive data subset selection) attempts to find a weighted subset of the data that can approximate specific attributes of the entire population~\citep{feldman2011coresets}.
Coreset selection algorithms start with defining a criterion based on which the subset of interest is found:
\begin{equation}\label{eq:coreset_selection}
    \mathcal{S}^{*}, \boldsymbol{\gamma}^{*} = \argmin_{\mathcal{S} \subseteq \mathcal{V}, \boldsymbol{\gamma}} \mathcal{C}(\mathcal{S}, \boldsymbol{\gamma}).
\end{equation}
In this definition, $\mathcal{S}$ is a subset of the entire data $\mathcal{V}$, and $\boldsymbol{\gamma}$ denotes the weights associated with each sample in the subset $\mathcal{S}$.
Moreover, $\mathcal{C}(\cdot, \cdot)$ denotes a selection criterion based on which the coreset $\mathcal{S}^{*}$ and its weights $\boldsymbol{\gamma}^{*}$ are aimed to be found.
Once the coreset is found, one can work with these samples to represent the entire dataset.
\Cref{fig:coreset_selection} depicts this definition of coreset selection.

Traditionally, coreset selection has been used for different machine learning tasks such as $k$-means and $k$-medians~\citep{harpeled2004oncoresets}, Na\"{i}ve Bayes and nearest neighbor classifiers~\citep{wei2015nnc}, and Bayesian inference~\citep{campbell2018coreset}.
Recently, coreset selection algorithms are developed for neural network training~\citep{mirzasoleiman2020craig,mirzasoleiman2020crust,killamsetty2021glister,killamsetty2021gradmatch,killamsetty2021retrieve}.
The main idea behind such methods is often to approximate the full gradient using a subset of the training data.

Existing coreset selection algorithms can only be used for the vanilla training of neural networks.
As such, they still suffer from adversarial vulnerability.
This paper extends coreset selection algorithms to robust neural network training and shows how they can be adapted to various robust training objectives.

\section{Proposed Method}\label{sec:proposed_method}

The main bottleneck in the time/computational complexity of adversarial training stems from constructing adversarial examples for the entire training set at each epoch.
FAT~\citep{wong2020fast} tries to eliminate this issue by using FGSM as its adversarial example generator.
However, this simplification (1) may lead to catastrophic overfitting~\citep{wong2020fast,andriushchenko2020understanding}, and (2) is not easily applicable to different types of adversarial training as FGSM is designed explicitly for $\ell_\infty$ attacks.

Instead of using a faster adversarial example generator, here, we take a different, \textit{orthogonal} path and try to reduce the training set size effectively.
This way, the original adversarial training algorithm can still be used on this smaller subset of training data. 
This approach can reduce the time/computational complexity, while optimizing a similar objective as the initial training.
In this sense, it leads to \textit{unified} method that can be used along various types of adversarial training objectives, including the ones that already exist and the ones that will be proposed in the future. 

The main hurdle in materializing this idea is the following question:

\begin{center}
\textit{How should we select this subset of the training data while minimizing the impact on the clean or robust accuracy?}
\end{center}

To answer this question, we next provide convergence guarantees for adversarial training using a subset of the training data.
This analysis would lay the foundation of our adversarial coreset selection objective in the subsequent sections.

\subsection{Convergence Guarantees}\label{sec:sec:convergence}

This section provides theoretical insights into our proposed adversarial coreset selection.
Specifically, we aim to find a convergence bound for adversarial training over a subset of the data and see how it relates to the optimal solution.

Let $L(\boldsymbol{\theta})$ denote the adversarial training objective over the entire training dataset such that:\footnote{Note that while we investigate a usual adversarial training objective, our analysis can also be easily extended to other objectives such as TRADES.}
\begin{equation}\label{eq:emp_loss}
    L(\boldsymbol{\theta}) = \sum_{i \in \mathcal{V}} \max_{\tilde{\boldsymbol{x}}_i} \mathcal{L}(\boldsymbol{\theta}; \tilde{\boldsymbol{x}}_i),
\end{equation}
where $\mathcal{L}(\boldsymbol{\theta}; \tilde{\boldsymbol{x}}_i)$ is the evaluation of the loss over input $\tilde{\boldsymbol{x}}_i$ with network parameters $\boldsymbol{\theta}$.\footnote{Note that while the loss is also a function of the ground-truth label $y_i$, we omit it for brevity.}
The goal is to find the optimal set of parameters $\boldsymbol{\theta}$ such that this objective is minimized.
To optimize the parameters $\boldsymbol{\theta}$ of the underlying learning algorithm, we use gradient descent.
Let ${t = 0, 1, \dots, T-1}$ denote the current epoch.
Then, gradient descent update can be written as:
\begin{equation}\label{eq:gradient_descent}
    \boldsymbol{\theta}_{t + 1} = \boldsymbol{\theta}_{t} - \alpha_{t} \nabla_{\boldsymbol{\theta}}L(\boldsymbol{\theta}_{t}),
\end{equation}
where $\alpha_{t}$ is the learning rate.

As demonstrated in \Cref{eq:coreset_selection}, the ultimate goal of coreset selection is to find a subset  $\mathcal{S} \subseteq \mathcal{V}$ of the training data with weights $\boldsymbol{\gamma}$ to approximate certain behaviors of the entire population $\mathcal{V}$.
In our case, the aim is to successfully train a robust neural network over this weighted dataset using:
\begin{equation}\label{eq:emp_loss_coreset}
    L^{\mathcal{S}}_{\boldsymbol{\gamma}}{(\boldsymbol{\theta})} = \sum_{j \in \mathcal{S}} \gamma_{j} \max_{\tilde{\boldsymbol{x}}_j} \mathcal{L}(\boldsymbol{\theta}; \tilde{\boldsymbol{x}}_j)
\end{equation}
which is the weighted loss over the coreset $\mathcal{S}$.\footnote{We will sometimes denote $L^{\mathcal{S}}_{\boldsymbol{\gamma}}{(\boldsymbol{\theta})}$ as $L_{\boldsymbol{\gamma}}{(\boldsymbol{\theta})}$ for brevity.}
Once a coreset $\mathcal{S}$ is found, we can replace the gradient descent update rule in \Cref{eq:gradient_descent} with:
\begin{equation}\label{eq:gradient_descent_coreset}
    \boldsymbol{\theta}_{t + 1} \approx \boldsymbol{\theta}_{t} - \alpha_{t} \nabla_{\boldsymbol{\theta}}L_{\boldsymbol{\gamma}^{t}}(\boldsymbol{\theta}_{t}),
\end{equation}
where
\begin{equation}\label{eq:emp_loss_coreset_rep}
    L_{\boldsymbol{\gamma}^{t}}{(\boldsymbol{\theta}_{t})} = \sum_{j \in \mathcal{S}^{t}} \gamma^{t}_{j} \max_{\tilde{\boldsymbol{x}}_j} \mathcal{L}(\boldsymbol{\theta}_{t}; \tilde{\boldsymbol{x}}_j)
\end{equation}
is the weighted empirical loss over the coreset $\mathcal{S}^{t}$ at iteration~$t$.

The following theorem extends the convergence guarantees of \citet{killamsetty2021gradmatch} to adversarial training.
\begin{theorem}\label{thm:convergence}
Let $\boldsymbol{\gamma}^{t}$ and $\mathcal{S}^{t}$ denote the weights and subset derived by any \textbf{adversarial coreset selection} algorithm at iteration $t$ of the full gradient descent.
Also, let $\boldsymbol{\theta}^{*}$ be the optimal model parameters, $\mathcal{L}$ be a convex loss function with respect to $\boldsymbol{\theta}$, and that the parameters are bounded such that $\norm{\boldsymbol{\theta} - \boldsymbol{\theta}^{*}} \leq \Delta$.
Moreover, let us define the gradient approximation error at iteration $t$ with:
\begin{equation}\nonumber
    \Gamma(L, L_{\boldsymbol{\gamma}}, \boldsymbol{\gamma}^{t}, \mathcal{S}^{t}, \boldsymbol{\theta}_t) \coloneqq \norm{\nabla_{\boldsymbol{\theta}}L(\boldsymbol{\theta}_t) - \nabla_{\boldsymbol{\theta}}L^{\mathcal{S}^{t}}_{\boldsymbol{\gamma}^{t}}{(\boldsymbol{\theta}_t)}}.
\end{equation}

Then, for $t=0, 1, \cdots, T-1$ the following guarantees hold:

(1) For a Lipschitz continuous loss function $\mathcal{L}$ with parameter $\sigma$ and constant learning rate ${\alpha=\frac{\Delta}{\sigma \sqrt{T}}}$ we have:
\begin{align}\nonumber
    \min _{t=0: T-1} L(\boldsymbol{\theta}_{t})&-L(\boldsymbol{\theta}^{*}) \leq \frac{\Delta \sigma}{\sqrt{T}} \cdots \\ \nonumber & \cdots+\frac{\Delta}{T} \sum_{t=0}^{T-1} \Gamma(L, L_{\boldsymbol{\gamma}^t}, \boldsymbol{\gamma}^{t}, \mathcal{S}^{t}, \boldsymbol{\theta}_t).
\end{align}

(2) Moreover, for a Lipschitz continuous loss $\mathcal{L}$ with parameter $\sigma$ and strongly convex with parameter $\mu$, by setting a learning rate $\alpha_{t}=\frac{2}{n\mu(1+t)}$ we have:
\begin{align}\nonumber
\min _{t=0: T-1} L(\boldsymbol{\theta}_{t})&-L(\boldsymbol{\theta}^{*}) \leq \frac{2 \sigma^{2}}{n\mu(T-1)} \cdots \\ \nonumber &\cdots+\sum_{t=0}^{T-1} \frac{2 \Delta t}{T(T-1)} \Gamma(L, L_{\boldsymbol{\gamma}^t}, \boldsymbol{\gamma}^{t}, \mathcal{S}^{t}, \boldsymbol{\theta}_t),
\end{align}
where $n$ is the total number of training data.
\end{theorem}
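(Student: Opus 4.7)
The plan is to mirror the standard convergence analysis of (sub)gradient descent for convex (respectively, strongly convex) Lipschitz objectives, while carefully tracking the error introduced by using the coreset gradient $\nabla_{\boldsymbol{\theta}} L^{\mathcal{S}^t}_{\boldsymbol{\gamma}^t}(\boldsymbol{\theta}_t)$ in place of the true gradient $\nabla_{\boldsymbol{\theta}} L(\boldsymbol{\theta}_t)$. The first preliminary step is to invoke Danskin's theorem on each summand $\max_{\tilde{\boldsymbol{x}}_i}\mathcal{L}(\boldsymbol{\theta};\tilde{\boldsymbol{x}}_i)$, which guarantees that $L$ and $L_{\boldsymbol{\gamma}^t}$ are (sub)differentiable in $\boldsymbol{\theta}$ with gradients equal to $\nabla_{\boldsymbol{\theta}}\mathcal{L}(\boldsymbol{\theta};\tilde{\boldsymbol{x}}_i^\star(\boldsymbol{\theta}))$ evaluated at the inner maximizer. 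This lets me treat $L$ as an ordinary convex function in $\boldsymbol{\theta}$ and reuse classical analysis.

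For part (1), I would begin from the identity
\begin{equation}\nonumber
\|\boldsymbol{\theta}_{t+1}-\boldsymbol{\theta}^*\|^2 = \|\boldsymbol{\theta}_t-\boldsymbol{\theta}^*\|^2 - 2\alpha\langle \nabla_{\boldsymbol{\theta}}L_{\boldsymbol{\gamma}^t}(\boldsymbol{\theta}_t),\,\boldsymbol{\theta}_t-\boldsymbol{\theta}^*\rangle + \alpha^2\|\nabla_{\boldsymbol{\theta}}L_{\boldsymbol{\gamma}^t}(\boldsymbol{\theta}_t)\|^2,
\end{equation}
then split the inner product via
\begin{equation}\nonumber
\langle \nabla L_{\boldsymbol{\gamma}^t}, \boldsymbol{\theta}_t-\boldsymbol{\theta}^*\rangle = \langle \nabla L(\boldsymbol{\theta}_t), \boldsymbol{\theta}_t-\boldsymbol{\theta}^*\rangle + \langle \nabla L_{\boldsymbol{\gamma}^t}-\nabla L,\boldsymbol{\theta}_t-\boldsymbol{\theta}^*\rangle.
\end{equation}
Convexity of $L$ bounds the first piece below by $L(\boldsymbol{\theta}_t)-L(\boldsymbol{\theta}^*)$, while Cauchy--Schwarz together with $\|\boldsymbol{\theta}_t-\boldsymbol{\theta}^*\|\le\Delta$ bounds the second piece by $\Delta\,\Gamma(L,L_{\boldsymbol{\gamma}^t},\boldsymbol{\gamma}^t,\mathcal{S}^t,\boldsymbol{\theta}_t)$. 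The squared-norm term is controlled via Lipschitz continuity; rearranging and summing over $t=0,\ldots,T-1$ telescopes the $\|\boldsymbol{\theta}_t-\boldsymbol{\theta}^*\|^2$ terms, so that averaging (and using $\min_t\le \text{avg}_t$) yields
\begin{equation}\nonumber
\min_{t}\bigl(L(\boldsymbol{\theta}_t)-L(\boldsymbol{\theta}^*)\bigr)\le \frac{\Delta^2}{2\alpha T}+\frac{\alpha\sigma^2}{2}+\frac{\Delta}{T}\sum_{t=0}^{T-1}\Gamma(\cdot).
\end{equation}
Plugging in $\alpha=\Delta/(\sigma\sqrt{T})$ collapses the first two terms into $\Delta\sigma/\sqrt{T}$, recovering the stated bound.

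For part (2), the outline is the same but I would use $\mu$-strong convexity, which sharpens the inner-product bound to $L(\boldsymbol{\theta}_t)-L(\boldsymbol{\theta}^*)+\tfrac{\mu}{2}\|\boldsymbol{\theta}_t-\boldsymbol{\theta}^*\|^2$, and I would use the time-varying step $\alpha_t=2/(n\mu(1+t))$. The standard trick is to multiply the one-step inequality through by a weight $t$ so that the strong-convexity contraction telescopes cleanly; after the telescoping sum over $t$ one picks up the factor $2\Delta t/(T(T-1))$ multiplying each $\Gamma$-term and the leading $2\sigma^2/(n\mu(T-1))$ term coming from $\alpha_t\sigma^2$. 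Again $\min_t\le$ weighted-average gives the final form.

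The main obstacle, and the only nontrivial deviation from a textbook gradient-descent proof, is justifying that the standard convex-analysis inequalities still apply to $L$ despite its definition as a sum of pointwise maxima. That is where Danskin's theorem carries the argument: it ensures $\nabla L(\boldsymbol{\theta})=\sum_i \nabla_{\boldsymbol{\theta}}\mathcal{L}(\boldsymbol{\theta};\tilde{\boldsymbol{x}}_i^\star(\boldsymbol{\theta}))$ and that convexity (and strong convexity) of $\mathcal{L}$ in $\boldsymbol{\theta}$ transfer to $L$. The remaining care is simply bookkeeping: consistently writing the coreset gradient as true gradient plus error, applying Cauchy--Schwarz and the diameter bound $\Delta$, and choosing the weights in the telescoping sum so that the final expression matches the two stated bounds.
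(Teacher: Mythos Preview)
Your proposal is correct and follows essentially the same route as the paper: expand $\|\boldsymbol{\theta}_{t+1}-\boldsymbol{\theta}^*\|^2$, split the inner product into a true-gradient piece and an error piece, apply convexity (respectively strong convexity), bound the error via Cauchy--Schwarz and the diameter $\Delta$, telescope, and optimize the step size. One small point of precision: you attribute the transfer of both Lipschitzness and strong convexity from $\mathcal{L}$ to $\max_{\tilde{\boldsymbol{x}}}\mathcal{L}$ to Danskin's theorem, but Danskin as stated only delivers convexity and differentiability; the paper proves the Lipschitz and strong-convexity transfers as two separate (elementary) lemmas, and also explicitly invokes a normalization $\sum_j \gamma_j^t=1$ to get $\|\nabla L_{\boldsymbol{\gamma}^t}\|\le\sigma$ and notes that the strong-convexity parameter of $L$ becomes $n\mu$.
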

\begin{proof}[Proof Sketch]
    We first draw a connection between the Lipschitz and strongly convex properties of the loss function $\mathcal{L}$ and its max function $\max \mathcal{L}$.
    Then, we exploit these lemmas as well as Danskin's theorem~(\Cref{danskin_theorem}) to provide the convergence guarantees.
    For more details, please see~Appendix~\ref{ap:proofs}.
\end{proof}

\subsection{Coreset Selection for Efficient Adversarial Training}\label{sec:sec:coreset_adversarial}
As our analysis in \Cref{thm:convergence} indicates, the convergence bound consists of two terms: an irreducible noise term and an additional term consisting of gradient approximation errors.
Motivated by our analysis for this idealistic setting, we set our adversarial coreset selection objective to minimize the gradient approximation error.

In particular, let us assume that we have a neural network that we aim to robustly train using:
\begin{equation}\label{eq:nn_training_objective_repeat}
\min_{\boldsymbol{\theta}} \sum_{i \in \mathcal{V}} \boldsymbol{\Phi} \left(f_{\boldsymbol{\theta}}; \boldsymbol{x}_{i}, y_{i}\right),
\end{equation}
where $\mathcal{V}$ denotes the entire training data, and $\boldsymbol{\Phi}(\cdot)$ takes one of \Cref{eq:at_functional,eq:trades_functional} formats.
We saw that we need a subset of data that can minimize the gradient approximation error to have a tight convergence bound.
This choice also makes intuitive sense: since the gradient contains the relevant information for training a neural network using gradient descent, we must attempt to find a subset of the data that can approximate the full gradient.
As such, we set the adversarial coreset selection criterion to:
\begin{align}\label{eq:gradient_app_coreset}\nonumber
	\mathcal{S}^{*}, \boldsymbol{\gamma}^{*} = \argmin_{{\mathcal{S}} \subseteq {\mathcal{V}}, {\boldsymbol{\gamma}}} \Big\lVert&\sum_{i \in {\mathcal{V}}} \nabla_{\boldsymbol{\theta}}\boldsymbol{\Phi} \left(f_{\boldsymbol{\theta}}; \boldsymbol{x}_{i}, y_{i}\right) \cdots \\ \cdots &- \sum_{j \in {\mathcal{S}}} {\gamma_{j}}\nabla_{\boldsymbol{\theta}}\boldsymbol{\Phi} \left(f_{\boldsymbol{\theta}}; \boldsymbol{x}_{j}, y_{j}\right)\Big\rVert,
\end{align}
\noindent where $\mathcal{S}^{*} \subseteq \mathcal{V}$ is the coreset, and $\gamma^{*}_{j}$'s are the weights of each sample in the coreset.
Once the coreset is found, instead of training the neural network using \Cref{eq:nn_training_objective_repeat}, we can optimize it just over the coreset using a weighted training objective
\begin{equation}\label{eq:nn_training_objective_coreset}
\min_{\boldsymbol{\theta}} \sum_{j \in \mathcal{S}^{*}} \gamma^{*}_{j} \boldsymbol{\Phi} \left(f_{\boldsymbol{\theta}}; \boldsymbol{x}_{j}, y_{j}\right).
\end{equation}

It can be shown that solving \Cref{eq:gradient_app_coreset} is NP-hard~\citep{mirzasoleiman2020craig,mirzasoleiman2020crust}.
Roughly, various coreset selection methods differ in how they approximate the solution of the aforementioned objective.
For instance, \textsc{Craig}~\citep{mirzasoleiman2020craig} casts this objective as a \textit{submodular set cover problem} and uses existing greedy solvers to get an approximate solution.
As another example, \textsc{GradMatch}~\citep{killamsetty2021gradmatch} analyzes the convergence of stochastic gradient descent using adaptive data subset selection.
Based on this study, \citet{killamsetty2021gradmatch} propose to use Orthogonal Matching Pursuit~(OMP)~\citep{pati1992omp,elenberg2016restricted} as a greedy solver of the data selection objective.
More information about these methods is provided in~Appendix~\ref{ap:sec:greedy_selection}.

The issue with the aforementioned coreset selection methods is that they are designed explicitly for vanilla training of neural networks~(see \Cref{fig:coresets_idea_van}), and they do not reflect the requirements of adversarial training.
As such, we should modify these methods to make them suitable for our purpose of robust neural network training.
Meanwhile, we should also consider the fact that the field of coreset selection is still evolving.
Thus, we aim to find a general modification that can later be used alongside newer versions of greedy coreset selection algorithms.

We notice that various coreset selection methods proposed for vanilla neural network training only differ in their choice of greedy solvers.
Therefore, we narrow down the changes we want to make to the first step of coreset selection: gradient computation.
Then, existing greedy solvers can be used to find the subset of training data that we are looking for.
To this end, we draw a connection between coreset selection methods and adversarial training using Danskin's theorem, as outlined next.
Our analysis shows that for adversarial coreset selection, one needs to add a pre-processing step where adversarial attacks for the raw training data need to be computed~(see~\Cref{fig:coresets_idea_adv}).

\subsection{From Vanilla to Adversarial Coreset Selection}\label{sec:sec:coreset_adversarial_from}

To construct the coreset selection objective given in~\Cref{eq:gradient_app_coreset}, we need to compute the loss gradient with respect to the neural network weights.
Once done, we can use existing greedy solvers to find the solution.
The gradient computation needs to be performed for the entire training set.
In particular, using our notation from \Cref{sec:sec:adversarial_training}, this step can be written as:
\begin{equation}\label{eq:nn_gradient}
\nabla_{\boldsymbol{\theta}}\boldsymbol{\Phi} \left(f_{\boldsymbol{\theta}}; \boldsymbol{x}_{i}, y_{i}\right) \quad \forall\quad i \in \mathcal{V},	
\end{equation}
where $\mathcal{V}$ denotes the training set.

For vanilla neural network training~(see \Cref{sec:sec:adversarial_training}) the above gradient is simply equal to $\nabla_{\boldsymbol{\theta}}\mathcal{L}_{\mathrm{CE}}\left(f_{\boldsymbol{\theta}}(\boldsymbol{x}_{i}), y_{i}\right)$ which can be computed using standard backpropagation.
In contrast, for the adversarial training objectives in \Cref{eq:at_functional,eq:trades_functional}, this gradient requires taking partial derivative of a maximization objective.
To this end, we use the famous Danskin's theorem~\citep{danskin1967theory} as stated below.

\begin{theorem}[Theorem A.1 in \citet{madry2018towards}]\label{danskin_theorem}
	Let $\mathcal{K}$ be a nonempty compact topological space, ${\mathcal{L}: \mathbb{R}^{m} \times \mathcal{K} \rightarrow \mathbb{R}}$ be such that $\mathcal{L}(\cdot, \boldsymbol{\delta})$ is differentiable and convex for every $\boldsymbol{\delta} \in \mathcal{K}$, and $\nabla_{\boldsymbol{\theta}} \mathcal{L}(\boldsymbol{\theta}, \boldsymbol{\delta})$ is continuous on $\mathbb{R}^{m} \times \mathcal{K}$.
	Also, let ${\boldsymbol{\delta}^{*}(\boldsymbol{\theta})=\left\{\boldsymbol{\delta} \in \arg \max _{\boldsymbol{\delta} \in \mathcal{K}} \mathcal{L}(\boldsymbol{\theta}, \boldsymbol{\delta})\right\}}$.
	Then, the corresponding max-function
	$$
	\phi(\boldsymbol{\theta})=\max _{\delta \in \mathcal{K}} \mathcal{L}(\boldsymbol{\theta}, \boldsymbol{\delta})
	$$
	is locally Lipschitz continuous, convex, directionally differentiable, and its directional derivatives along vector $\boldsymbol{h}$ satisfy
	$$
	\phi^{\prime}(\boldsymbol{\theta}, \boldsymbol{h})=\sup _{\boldsymbol{\delta} \in \boldsymbol{\delta}^{*}(\boldsymbol{\theta})} \boldsymbol{h}^{\top} \nabla_{\boldsymbol{\theta}} \mathcal{L}(\boldsymbol{\theta}, \boldsymbol{\delta}).
	$$
	In particular, if for some $\boldsymbol{\theta} \in \mathbb{R}^{m}$ the set $\boldsymbol{\delta}^{*}(\boldsymbol{\theta})=\left\{\boldsymbol{\delta}_{\boldsymbol{\theta}}^{*}\right\}$ is a singleton, then the max-function is differentiable at $\boldsymbol{\theta}$ and
	$$
	\nabla \phi(\boldsymbol{\theta})=\nabla_{\boldsymbol{\theta}} \mathcal{L}\left(\boldsymbol{\theta}, \boldsymbol{\delta}_{\boldsymbol{\theta}}^{*}\right).
	$$
\end{theorem}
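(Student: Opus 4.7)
The plan is to establish the four assertions of the theorem sequentially, exploiting the compactness of $\mathcal{K}$ and the joint continuity of $\nabla_{\boldsymbol{\theta}} \mathcal{L}$ to control limits and suprema. Throughout, I would restrict $\boldsymbol{\theta}$ to a compact neighborhood $B$ so that, by continuity of $\nabla_{\boldsymbol{\theta}} \mathcal{L}$ on the compact product $B \times \mathcal{K}$, the norm $\|\nabla_{\boldsymbol{\theta}} \mathcal{L}(\boldsymbol{\theta}', \boldsymbol{\delta})\|$ admits a uniform bound $M$ for $(\boldsymbol{\theta}', \boldsymbol{\delta}) \in B \times \mathcal{K}$.

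Convexity of $\phi$ is immediate: for each fixed $\boldsymbol{\delta}$ the map $\boldsymbol{\theta} \mapsto \mathcal{L}(\boldsymbol{\theta}, \boldsymbol{\delta})$ is convex by hypothesis, and a pointwise supremum of convex functions is convex. For local Lipschitz continuity, the mean value inequality applied to $\mathcal{L}(\cdot, \boldsymbol{\delta})$ yields $|\mathcal{L}(\boldsymbol{\theta}_1, \boldsymbol{\delta}) - \mathcal{L}(\boldsymbol{\theta}_2, \boldsymbol{\delta})| \leq M \|\boldsymbol{\theta}_1 - \boldsymbol{\theta}_2\|$ uniformly in $\boldsymbol{\delta} \in \mathcal{K}$; combined with the trivial inequality $\phi(\boldsymbol{\theta}_1) - \phi(\boldsymbol{\theta}_2) \leq \sup_{\boldsymbol{\delta}}[\mathcal{L}(\boldsymbol{\theta}_1, \boldsymbol{\delta}) - \mathcal{L}(\boldsymbol{\theta}_2, \boldsymbol{\delta})]$ and its symmetric counterpart, this transfers the Lipschitz estimate to $\phi$.

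The heart of the argument is the directional derivative formula, which I would obtain by sandwiching the right-hand difference quotient $q(t) = [\phi(\boldsymbol{\theta} + t \boldsymbol{h}) - \phi(\boldsymbol{\theta})]/t$ between matching lower and upper limits. For the lower bound, pick any $\boldsymbol{\delta}^{*} \in \boldsymbol{\delta}^{*}(\boldsymbol{\theta})$; since $\phi(\boldsymbol{\theta} + t \boldsymbol{h}) \geq \mathcal{L}(\boldsymbol{\theta} + t \boldsymbol{h}, \boldsymbol{\delta}^{*})$ while $\phi(\boldsymbol{\theta}) = \mathcal{L}(\boldsymbol{\theta}, \boldsymbol{\delta}^{*})$, differentiability of $\mathcal{L}(\cdot, \boldsymbol{\delta}^{*})$ gives $\liminf_{t \downarrow 0} q(t) \geq \boldsymbol{h}^{\top} \nabla_{\boldsymbol{\theta}} \mathcal{L}(\boldsymbol{\theta}, \boldsymbol{\delta}^{*})$, and taking the supremum over $\boldsymbol{\delta}^{*}(\boldsymbol{\theta})$ yields the desired lower bound. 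The matching upper bound is more delicate: for each $t > 0$ select a maximizer $\boldsymbol{\delta}_t \in \boldsymbol{\delta}^{*}(\boldsymbol{\theta} + t \boldsymbol{h})$ (which exists by compactness and continuity) and write
\[
q(t) \leq \frac{\mathcal{L}(\boldsymbol{\theta} + t \boldsymbol{h}, \boldsymbol{\delta}_t) - \mathcal{L}(\boldsymbol{\theta}, \boldsymbol{\delta}_t)}{t} = \boldsymbol{h}^{\top} \nabla_{\boldsymbol{\theta}} \mathcal{L}(\boldsymbol{\theta} + s_t \boldsymbol{h}, \boldsymbol{\delta}_t)
\]
for some $s_t \in (0, t)$ via the one-dimensional mean value theorem. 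Compactness of $\mathcal{K}$ then lets me extract a subsequence $t_k \downarrow 0$ with $\boldsymbol{\delta}_{t_k} \to \bar{\boldsymbol{\delta}}$; continuity of $\mathcal{L}$ and of $\phi$ force $\mathcal{L}(\boldsymbol{\theta}, \bar{\boldsymbol{\delta}}) = \phi(\boldsymbol{\theta})$, so $\bar{\boldsymbol{\delta}} \in \boldsymbol{\delta}^{*}(\boldsymbol{\theta})$, and continuity of $\nabla_{\boldsymbol{\theta}} \mathcal{L}$ on $B \times \mathcal{K}$ yields $\limsup_{t \downarrow 0} q(t) \leq \boldsymbol{h}^{\top} \nabla_{\boldsymbol{\theta}} \mathcal{L}(\boldsymbol{\theta}, \bar{\boldsymbol{\delta}}) \leq \sup_{\boldsymbol{\delta} \in \boldsymbol{\delta}^{*}(\boldsymbol{\theta})} \boldsymbol{h}^{\top} \nabla_{\boldsymbol{\theta}} \mathcal{L}(\boldsymbol{\theta}, \boldsymbol{\delta})$. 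Combining both limits gives the stated formula.

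Finally, when $\boldsymbol{\delta}^{*}(\boldsymbol{\theta}) = \{\boldsymbol{\delta}_{\boldsymbol{\theta}}^{*}\}$ is a singleton, the formula reduces to $\phi'(\boldsymbol{\theta}, \boldsymbol{h}) = \boldsymbol{h}^{\top} \nabla_{\boldsymbol{\theta}} \mathcal{L}(\boldsymbol{\theta}, \boldsymbol{\delta}_{\boldsymbol{\theta}}^{*})$, which is linear in $\boldsymbol{h}$; for a convex locally Lipschitz function on $\mathbb{R}^{m}$, existence of directional derivatives that depend linearly on the direction upgrades to Fr\'echet differentiability, yielding $\nabla \phi(\boldsymbol{\theta}) = \nabla_{\boldsymbol{\theta}} \mathcal{L}(\boldsymbol{\theta}, \boldsymbol{\delta}_{\boldsymbol{\theta}}^{*})$. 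The main obstacle will be the subsequence-extraction step in the upper bound: the maximizer $\boldsymbol{\delta}_t$ need not converge, so care is required to pass to a subsequence, identify its limit with an element of $\boldsymbol{\delta}^{*}(\boldsymbol{\theta})$ via upper semicontinuity of the argmax map, and only then invoke continuity of $\nabla_{\boldsymbol{\theta}} \mathcal{L}$ on the product space to close the estimate.
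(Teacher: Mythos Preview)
Your proof is a correct and standard argument for Danskin's theorem, but there is nothing to compare it against in this paper: the authors do not prove this statement. It is quoted verbatim as ``Theorem A.1 in \citet{madry2018towards}'' and used as a black-box tool (in the derivations of \Cref{eq:nn_gradient_at_penufinal}, \Cref{eq:nn_gradeint_trades_penufinal}, and in the proofs of \Cref{strongly_convex_lemma} and \Cref{thm:convergence}), with no proof or proof sketch supplied anywhere in the body or the appendix.

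For what it is worth, your argument is the classical one: convexity as a supremum of convex functions, local Lipschitzness from a uniform gradient bound on compacta, the sandwich on the difference quotient via a fixed maximizer (lower bound) and a moving maximizer plus mean value theorem and a compactness-extracted limit point in $\boldsymbol{\delta}^{*}(\boldsymbol{\theta})$ (upper bound), and finally the passage from linear directional derivatives to Fr\'echet differentiability via convexity. All steps are sound. If you want a reference rather than a fresh proof, the result is originally due to Danskin (1967) and appears in many convex analysis texts; the specific formulation here is lifted from the appendix of \citet{madry2018towards}.
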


In summary, \Cref{danskin_theorem} indicates how to take the gradient of a max-function.
To this end, it suffices to (1) find the maximizer, and (2) evaluate the normal gradient at this point.

Now that we have stated Danskin's theorem, we are ready to show how it can provide the connection between vanilla coreset selection and the adversarial training objectives of \Cref{eq:at_functional,eq:trades_functional}.
We show this for the two cases of adversarial training and TRADES, but it can also be used for any other robust training objective.

\subsubsection{\textcolor{blue}{Case 1} ($\ell_p$-PGD and Perceptual Adversarial Training)}
Going back to \Cref{eq:nn_gradient}, we need to compute this gradient term for our coreset selection objective in \Cref{eq:at_functional}.
In particular, we need to compute:
\begin{equation}\label{eq:nn_gradient_at}
\nabla_{\boldsymbol{\theta}}\boldsymbol{\Phi} \left(f_{\boldsymbol{\theta}}; \boldsymbol{x}, y\right) =  \nabla_{\boldsymbol{\theta}} \max_{\tilde{\boldsymbol{x}}} \mathcal{L}_{\mathrm{CE}}\left(f_{\boldsymbol{\theta}}(\tilde{\boldsymbol{x}}), y\right)
\end{equation}
under the constraint $\mathrm{d}({\tilde{\boldsymbol{x}}, \boldsymbol{x}})\leq \varepsilon$ for every training sample.
Based on Danskin's theorem, we deduce:
\begin{equation}\label{eq:nn_gradient_at_penufinal}
\nabla_{\boldsymbol{\theta}}\boldsymbol{\Phi} \left(f_{\boldsymbol{\theta}}; \boldsymbol{x}, y\right) =  \nabla_{\boldsymbol{\theta}} \mathcal{L}_{\mathrm{CE}}\left(f_{\boldsymbol{\theta}}({\boldsymbol{x}^{*}}), y\right),
\end{equation}
where $\boldsymbol{x}^{*}$ is the solution to:
\begin{equation}\label{eq:at_max_objective}
\max_{\tilde{\boldsymbol{x}}} \mathcal{L}_{\mathrm{CE}}\left(f_{\boldsymbol{\theta}}(\tilde{\boldsymbol{x}}), y\right) \quad \text{s.t.} \quad \mathrm{d}({\tilde{\boldsymbol{x}}, \boldsymbol{x}})\leq \varepsilon.
\end{equation}
The conditions under which Danskin's theorem hold might not be satisfied for neural networks in general.
This is due to the presence of functions with discontinuous gradients, such as ReLU activation, in neural networks.
More importantly, finding the exact solution of \Cref{eq:at_max_objective} is not straightforward as neural networks are highly non-convex.
Usually, the exact solution $\boldsymbol{x}^{*}$ is replaced with its approximation, which is an adversarial example generated under the \Cref{eq:at_max_objective} objective~\citep{madry2018adversarial}.
Based on this approximation, we can re-write \Cref{eq:nn_gradient_at_penufinal} as:
\begin{equation}\label{eq:nn_gradient_at_final}
\nabla_{\boldsymbol{\theta}}\boldsymbol{\Phi} \left(f_{\boldsymbol{\theta}}; \boldsymbol{x}, y\right) \approx  \nabla_{\boldsymbol{\theta}} \mathcal{L}_{\mathrm{CE}}\left(f_{\boldsymbol{\theta}}({\boldsymbol{x}_{\rm{adv}}}), y\right).
\end{equation}
In other words, to perform coreset selection for $\ell_p$-PGD~\citep{madry2018towards} and Perceptual~\citep{laidlaw2021pat} Adversarial Training, one needs to add a pre-processing step to the gradient computation.
At this step, adversarial examples for the entire training set must be constructed.
Then, the coresets can be built as in vanilla neural networks.

\subsubsection{\textcolor{blue}{Case 2} (TRADES Adversarial Training)}
For TRADES~\citep{zhang2019trades}, the gradient computation is slightly different as the objective in \Cref{eq:trades_functional} consists of two terms.
In this case, the gradient can be written as:
\begin{align}\label{eq:nn_gradeint_trades}\nonumber
	\nabla_{\boldsymbol{\theta}}\boldsymbol{\Phi} \left(\boldsymbol{x}, y; f_{\boldsymbol{\theta}}\right) &= \nabla_{\boldsymbol{\theta}}\mathcal{L}_{\mathrm{CE}}\left(f_{\boldsymbol{\theta}}(\boldsymbol{x}), y\right) \cdots \\ \cdots &+ \nabla_{\boldsymbol{\theta}}\max_{\tilde{\boldsymbol{x}}} \mathcal{L}_{\mathrm{CE}}\left(f_{\boldsymbol{\theta}}(\tilde{\boldsymbol{x}}), f_{\boldsymbol{\theta}}(\boldsymbol{x})\right)/\lambda,
\end{align}
The first term is the normal gradient of the neural network.
For the second term, we apply Danskin's theorem to obtain:
\begin{align}\label{eq:nn_gradeint_trades_penufinal}\nonumber
	\nabla_{\boldsymbol{\theta}}\boldsymbol{\Phi} \left(\boldsymbol{x}, y; f_{\boldsymbol{\theta}}\right) &\approx \nabla_{\boldsymbol{\theta}}\mathcal{L}_{\mathrm{CE}}\left(f_{\boldsymbol{\theta}}(\boldsymbol{x}), y\right)\cdots \\\cdots&+ \nabla_{\boldsymbol{\theta}} \mathcal{L}_{\mathrm{CE}}\left(f_{\boldsymbol{\theta}}(\boldsymbol{x}_{\rm{adv}}), f_{\boldsymbol{\theta}}(\boldsymbol{x})\right)/\lambda,
\end{align}
where $\boldsymbol{x}_{\rm{adv}}$ is an approximate solution to:
\begin{equation}\label{eq:trades_max}
\max_{\tilde{\boldsymbol{x}}} \mathcal{L}_{\mathrm{CE}}\left(f_{\boldsymbol{\theta}}(\tilde{\boldsymbol{x}}), f_{\boldsymbol{\theta}}(\boldsymbol{x})\right)/\lambda ~ \text{s.t.} ~ \mathrm{d}({\tilde{\boldsymbol{x}}, \boldsymbol{x}})\leq \varepsilon.
\end{equation}
Then, we compute the second gradient term in \Cref{eq:nn_gradeint_trades_penufinal} using the multi-variable chain rule~(see~\Cref{ap:trades_gradient}).
We can write the final TRADES gradient as:
	\begin{align}\nonumber
		&\nabla_{\boldsymbol{\theta}}\boldsymbol{\Phi} \left(\boldsymbol{x}, y; f_{\boldsymbol{\theta}}\right)\\\nonumber
		&\qquad= \nabla_{\boldsymbol{\theta}}\mathcal{L}_{\mathrm{CE}}\left(f_{\boldsymbol{\theta}}(\boldsymbol{x}), y\right)\cdots\\\nonumber
		&\qquad\cdots+ \nabla_{\boldsymbol{\theta}} \mathcal{L}_{\mathrm{CE}}\left(f_{\boldsymbol{\theta}}(\boldsymbol{x}_{\rm{adv}}), \text{\texttt{freeze}}\left(f_{\boldsymbol{\theta}}(\boldsymbol{x})\right)\right)/\lambda\cdots \\ \label{eq:nn_gradeint_trades_final}
		&\qquad\cdots+ \nabla_{\boldsymbol{\theta}} \mathcal{L}_{\mathrm{CE}}\left(\text{\texttt{freeze}}\left(f_{\boldsymbol{\theta}}(\boldsymbol{x}_{\rm{adv}})\right), f_{\boldsymbol{\theta}}(\boldsymbol{x})\right)/\lambda.
	\end{align}
where $\text{\texttt{freeze}}(\cdot)$ stops the gradients from backpropagating through its argument function.

Having found the loss gradients $\nabla_{\boldsymbol{\theta}}\boldsymbol{\Phi} \left(f_{\boldsymbol{\theta}}; \boldsymbol{x}_{i}, y_{i}\right)$ for $\ell_p$-PGD, PAT~(\textcolor{blue}{Case 1}), and TRADES~(\textcolor{blue}{Case 2}), we can construct \Cref{eq:gradient_app_coreset} and use existing greedy solvers like \textsc{Craig}~\citep{mirzasoleiman2020craig} or \textsc{GradMatch}~\citep{killamsetty2021gradmatch} to find the coreset.
Conceptually, adversarial coreset selection amounts to adding a pre-processing step where we need to build perturbed versions of the training data using their respective objectives in \Cref{eq:at_max_objective,eq:trades_max}.
Afterward, greedy subset selection algorithms are used to construct the coresets based on the value of the gradients.
Finally, having selected the coreset data, one can run a \textit{weighted adversarial training} only on the data that remains in the coreset:
\begin{equation}\label{eq:nn_training_objective_coreset_repeat}
    \min_{\boldsymbol{\theta}} \sum_{j \in \textcolor{mygreen}{\mathcal{S}^{*}}} \textcolor{mygreen}{\gamma^{*}_{j}} \boldsymbol{\Phi} \left(f_{\boldsymbol{\theta}}; \boldsymbol{x}_{j}, y_{j}\right).
\end{equation}
As can be seen, we are not changing the essence of the training objective in this process.
We are just reducing the training size to enhance the computational efficiency of our proposed solution, and as such, we can use it along any adversarial training objective.

\subsection{Practical Considerations}\label{sub:sub:practical}

\begin{algorithm*}[t!]
	\vspace*{0.25em}
	\caption{Adversarial Training with Coreset Selection~\label{alg:adv_core}} 
	\begin{small}
	\textbf{Input}: dataset $\mathcal{D}=\left\{\left(\boldsymbol{x}_{i}, y_{i}\right)\right\}_{i=1}^{n}$, neural network~${f_{\boldsymbol{\theta}}(\cdot)}$.\vspace*{0.25em}
	\\
	\textbf{Output}: robustly trained neural network~${f_{\boldsymbol{\theta}}(\cdot)}$.\vspace*{0.25em}
	\\
	\textbf{Parameters}: learning rate~$\alpha$, total epochs~$E$, warm-start coefficient~$\kappa$, coreset update period~$T$, batch size~$b$, coreset size~$k$, perturbation bound~$\varepsilon$.\vspace*{0.25em}
	\begin{algorithmic}[1]
		\State Initialize~${\boldsymbol{\theta}}$ randomly.
		\State $\kappa_{\rm epochs} = \kappa \cdot E$
		\State $T_{\mathrm{warm}} = \kappa_{\rm epochs} \cdot k$
		\For {$t=1,2,\ldots, E$}
		\If {$t \leq T_{\mathrm{warm}}$}\vspace*{0.15em}
		    \State $\mathcal{S}, \boldsymbol{\gamma} \leftarrow \mathcal{D}, \boldsymbol{1}$ \verb+\\ Warm-start with the entire data and uniform weights.+
		\ElsIf {$t \geq \kappa_{\rm epochs}$ \& $t\%T=0$} \vspace*{0.15em}
		    \State $\mathcal{I} = \text{\textsc{BatchAssignments}}\left(\mathcal{D}, b\right)$ \verb+\\ Batch-wise selection.+\vspace*{0.15em}
		    \State $\mathcal{Y} = \left\{{f_{\boldsymbol{\theta}}(\boldsymbol{x}_{i})}\mid\left(\boldsymbol{x}_{i}, y_{i}\right) \in \mathcal{D}\right\}$ \verb+\\ Computing the logits.+\vspace*{0.15em}
		    \State $\mathcal{G} = \text{\textsc{AdvGradient}}\left(\mathcal{D}, \mathcal{Y}\right)$ \verb+\\ Using Eqs. 18 & 22 to find the gradients.+\vspace*{0.15em}
		    \State $\mathcal{S}, \boldsymbol{\gamma}\leftarrow\text{\textsc{GreedySolver}}\left(\mathcal{D}, \mathcal{I}, \mathcal{G}, \mathrm{coreset~size}=k\right)$ \verb+\\ Using greedy solvers to get the coreset.+\vspace*{0.15em}
		\Else
		    \State \textbf{Continue}
		\EndIf
		\For {$\mathrm{batch}$ in $\mathcal{S}$}
		    \State $\mathrm{batch_{adv}}=\text{\textsc{AdvExampleGen}}\left(\mathrm{batch}, f_{\boldsymbol{\theta}}, \varepsilon\right)$ \vspace*{0.15em}
		    \State $\boldsymbol{\theta}\leftarrow\text{\textsc{SGD}}\left(\mathrm{batch_{adv}}, f_{\boldsymbol{\theta}}, \alpha, \boldsymbol{\gamma}\right)$ \verb+\\ Performing SGD over a batch of data.+\vspace*{0.15em}
		\EndFor
		\EndFor
	\end{algorithmic}
	\end{small}
\end{algorithm*}

Since coreset selection depends on the current values of the neural network weights, it is crucial to update the coresets as the training progresses.
Prior work~\citep{killamsetty2021glister,killamsetty2021gradmatch} has shown that this selection needs to be done every $T$ epochs, where $T$ is usually greater than 15.
Also, we employ small yet critical practical changes while using coreset selection to increase efficiency.
We summarize these practical tweaks below.
Further detail can be found in~\citep{killamsetty2021gradmatch,mirzasoleiman2020craig}.
\bmhead{Gradient Approximation}
    As we saw, both \Cref{eq:nn_gradient_at_final,eq:nn_gradeint_trades_final} require computation of the loss gradient with respect to the neural network weights.
	This is equal to backpropagation through the entire neural network, which is inefficient.
	Instead, it is common to replace the exact gradients in \Cref{eq:nn_gradient_at_final,eq:nn_gradeint_trades_final} with their last-layer approximation~\citep{katharopoulos2018notall,mirzasoleiman2020craig,killamsetty2021gradmatch}.
	In other words, instead of backpropagating through the entire network, one can backpropagate up until the penultimate layer.
	This estimate has an approximate complexity equal to forwardpropagation, and it has been shown to work well in practice~\citep{mirzasoleiman2020craig,mirzasoleiman2020crust,killamsetty2021glister,killamsetty2021gradmatch}.\footnote{\textcolor{revision}{Note that although adversarial coreset selection requires backpropagation for adversarial example generation, the gradient approximation technique comes in handy when we are trying to compute the point-wise distance between gradients of two samples. In particular, instead of having to compute the distance between two vectors with millions of elements, one can directly work on lower dimensional gradient estimates and calculate the distance between all samples in a dataset at once.}}
\bmhead{Batch-wise Coreset Selection}
	As discussed in \Cref{sec:sec:coreset_adversarial}, data selection is usually done in a \textit{sample-wise} fashion where each data sample is separately considered to be selected.
	This way, one must find the data candidates out of the entire training set.
	To increase efficiency, \citet{killamsetty2021gradmatch} proposed the \textit{batch-wise} variant.
	In this type of coreset selection, the data is first split into several batches.
	Then, the algorithm makes a selection out of these batches.
	Intuitively, this change can increase efficiency as the sample size is reduced from the number of data points to the number of batches.
\bmhead{Warm-start with the Entire Data}
	Finally, as we shall see in the experiments, it is important to warm-start the adversarial training using the entire dataset.
	Afterward, the coreset selection is activated, and adversarial training is only performed using the coreset data.

\subsection{Final Algorithm}

\Cref{fig:coresets_idea} and \Cref{alg:adv_core} summarize our coreset selection approach for adversarial training.
As can be seen, our proposed method is a generic and principled approach in contrast to existing methods such as FAT~\citep{wong2020fast}.
In particular, our approach provides the following advantages compared to existing methods:
\begin{enumerate}
	\item The proposed approach does not involve algorithmic level manipulations and dependency on specific training attributes such as $\ell_\infty$ bound or cyclic learning rate.
	Also, it controls the training speed through coreset size, which can be specified solely based on available computational resources.
	\item The simplicity of our method makes it compatible with any existing/future adversarial training objectives. Furthermore, as we will see in \Cref{sec:experiments}, our approach can be combined with any greedy coreset selection algorithms to deliver robust neural networks.
\end{enumerate}
These characteristics are important as they increase the likelihood of applying our proposed method for robust neural network training no matter the training objectives.
This starkly contrasts with existing methods focusing solely on a particular training objective.

\section{Experimental Results}\label{sec:experiments}
In this section, we present our experimental results.
We show how our proposed approach can efficiently reduce the training time of various robust objectives in different settings.
To this end, we train our approach using TRADES~\citep{zhang2019trades}, $\ell_p$-PGD~\citep{madry2018towards} and PAT~\citep{laidlaw2021pat} on CIFAR-10~\citep{krizhevsky2009learning}, SVHN~\citep{netzer2011reading}, and a subset of \textcolor{revision}{ImageNet~\citep{russakovsky2015imagenet} with 12 and 100 classes.
For TRADES and $\ell_p$-PGD training, we use ResNet-18~\citep{he2016deep} and WideResNet-28-10~\citep{zagoruyko2016wresnet} classifiers.
For PAT and ImageNet experiments, we use ResNet-34 and 50 architectures.}
Further implementation details can be found in~Appendix~\ref{ap:sec:imp_det}.

\subsection{TRADES and $\ell_p$-PGD Robust Training}

\begin{table*}[p!]
    \centering
	\caption{\textcolor{revision}{Clean~(ACC) and robust~(RACC) accuracy, and total training time~(T) of different adversarial training methods.
	For each method, all the hyper-parameters were kept the same as full training.
	For our proposed approach, the difference with full training is shown in parentheses.
	The information on the computation of RACC in each case is given in~Appendix~\ref{ap:sec:imp_det}.}}
	\label{tab:TRADES_lp_PGD}
	\begin{center}
		\begin{small}
		    \setlength\tabcolsep{0.45em}
			\def\arraystretch{1.65}
			\begin{tabular}{ccccccc}
				\toprule
                \multirow{2}{*}{\rotatebox[origin=c]{90}{\textbf{Model}}}
				&\multirow{2}{*}{\rotatebox[origin=c]{90}{\textbf{Objecive}}}
				&\multirow{2}{*}{\rotatebox[origin=c]{90}{\textbf{Data}}}
				&\multirow{2}{*}{\textbf{Training}}
				&\multicolumn{3}{c}{\textbf{Performance Measures}}\\
				\cmidrule(lr){5-7}
				&&&                                 &  \textuparrow~\textbf{ACC} (\%)         & \textuparrow~\textbf{RACC} (\%)         & \textdownarrow~\textbf{T} (mins)\\
				\midrule
                \multirow{9}{*}{\rotatebox[origin=c]{90}{\textbf{ResNet-18}}}
				& \multirow{3}{*}{\rotatebox[origin=c]{90}{\textbf{TRADES}}}  & \multirow{3}{*}{\rotatebox[origin=c]{90}{\textbf{CIFAR-10}}}
				&  Adv. \textsc{Craig} (Ours)       & $83.03$ (\textcolor{red}{$-2.38$})            & $41.45$ (\textcolor{red}{$-2.74$})            &  $179.20$ (\textcolor{darkgreen}{$-165.09$})\\
				&&& Adv. \textsc{GradMatch} (Ours)  & $83.07$ (\textcolor{red}{$-2.34$})            & $41.52$ (\textcolor{red}{$-2.67$})            &  $178.73$ (\textcolor{darkgreen}{$-165.56$})\\
				&&& Full Adv. Training              & $85.41$                                       & $44.19$                                       &  $344.29$\\
				\cmidrule(lr){2-7}
				& \multirow{3}{*}{\rotatebox[origin=c]{90}{\textbf{$\ell_\infty$-PGD}}} & \multirow{3}{*}{\rotatebox[origin=c]{90}{\textbf{CIFAR-10}}}
				&  Adv. \textsc{Craig} (Ours)       & $80.37$ (\textcolor{red}{$-2.77$})            & $45.07$ (\textcolor{darkgreen}{$+3.68$})            &  $148.01$ (\textcolor{darkgreen}{$-144.86$})\\
				&&& Adv. \textsc{GradMatch} (Ours)  & $80.67$ (\textcolor{red}{$-2.47$})            & $45.23$ (\textcolor{darkgreen}{$+3.84$})            &  $148.03$ (\textcolor{darkgreen}{$-144.84$})\\
				&&& Full Adv. Training              & $83.14$                                       & $41.39$                                             &  $292.87$\\
				\cmidrule(lr){2-7}
				& \multirow{3}{*}{\rotatebox[origin=c]{90}{\textbf{$\ell_2$-PGD}}} & \multirow{3}{*}{\rotatebox[origin=c]{90}{\textbf{SVHN}}}
				&  Adv. \textsc{Craig} (Ours)       & $95.42$ (\textcolor{darkgreen}{$+0.10$})            & $49.68$ (\textcolor{red}{$-3.34$})            &  $130.04$ (\textcolor{darkgreen}{$-259.42$})\\
				&&& Adv. \textsc{GradMatch} (Ours)  & $95.57$ (\textcolor{darkgreen}{$+0.25$})            & $50.41$ (\textcolor{red}{$-2.61$})            &  $125.53$ (\textcolor{darkgreen}{$-263.93$})\\
				&&& Full Adv. Training              & $95.32$                                             & $53.02$                                       &  $389.46$\\
                \midrule
                \multirow{9}{*}{\rotatebox[origin=c]{90}{\textbf{WideResNet-28-10}}}
				& \multirow{3}{*}{\rotatebox[origin=c]{90}{\textbf{TRADES}}}  & \multirow{3}{*}{\rotatebox[origin=c]{90}{\textbf{CIFAR-10}}}
				&  Adv. \textsc{Craig} (Ours)       & $85.25$ (\textcolor{red}{$-1.50$})            & $43.79$ (\textcolor{red}{$-1.29$})            &  $1022.52$ (\textcolor{darkgreen}{$-1037.10$})\\
				&&& Adv. \textsc{GradMatch} (Ours)  & $84.54$ (\textcolor{red}{$-2.21$})            & $47.83$ (\textcolor{darkgreen}{$+2.75$})      &  $1032.32$ (\textcolor{darkgreen}{$-1027.30$})\\
				&&& Full Adv. Training              & $86.75$                                       & $45.08$                                       &  $2059.62$\\
				\cmidrule(lr){2-7}
				& \multirow{3}{*}{\rotatebox[origin=c]{90}{\textbf{$\ell_\infty$-PGD}}} & \multirow{3}{*}{\rotatebox[origin=c]{90}{\textbf{CIFAR-10}}}
				&  Adv. \textsc{Craig} (Ours)       & $83.57$ (\textcolor{red}{$-2.68$})            & $41.78$ (\textcolor{red}{$-3.71$})                  &  $1364.99$ (\textcolor{darkgreen}{$-1348.18$})\\
				&&& Adv. \textsc{GradMatch} (Ours)  & $83.47$ (\textcolor{red}{$-2.78$})            & $42.16$ (\textcolor{red}{$-3.33$})                  &  $1364.45$ (\textcolor{darkgreen}{$-1348.72$})\\
				&&& Full Adv. Training              & $86.25$                                       & $45.49$                                             &  $2713.17$\\
				\cmidrule(lr){2-7}
				& \multirow{3}{*}{\rotatebox[origin=c]{90}{\textbf{$\ell_2$-PGD}}} & \multirow{3}{*}{\rotatebox[origin=c]{90}{\textbf{SVHN}}}
				&  Adv. \textsc{Craig} (Ours)       & $95.96$ (\textcolor{darkgreen}{$+0.19$})            & $43.04$ (\textcolor{red}{$-2.21$})            &  $886.64$ (\textcolor{darkgreen}{$-1826.99$})\\
				&&& Adv. \textsc{GradMatch} (Ours)  & $96.06$ (\textcolor{darkgreen}{$+0.29$})            & $45.47$ (\textcolor{darkgreen}{$+0.22$})      &  $874.14$ (\textcolor{darkgreen}{$-1839.49$})\\
				&&& Full Adv. Training              & $95.77$                                             & $45.25$                                       &  $2713.63$\\
                \midrule
                \multirow{3}{*}{\rotatebox[origin=c]{90}{\textbf{ResNet-50}}}
                &\multirow{3}{*}{\rotatebox[origin=c]{90}{\textbf{TRADES}}}  & \multirow{3}{*}{\rotatebox[origin=c]{90}{\footnotesize\textbf{ImageNet-100}}}
				&  Adv. \textsc{Craig} (Ours)      & $56.64$ (\textcolor{red}{$-2.98$})            & $21.80$ (\textcolor{red}{$-1.94$})            &  $1369.21$ (\textcolor{darkgreen}{$-997.28$})\\
				&&& Adv. \textsc{GradMatch} (Ours)  & $56.48$ (\textcolor{red}{$-3.44$})            & $21.92$ (\textcolor{red}{$-1.82$})            &  $1360.14$ (\textcolor{darkgreen}{$-1006.35$})\\
				&&& Full Adv. Training              & $59.92$                                       & $23.74$                                       &  $2366.49$\\
                \midrule
                \multirow{3}{*}{\rotatebox[origin=c]{90}{\textbf{ResNet-34}}}
                &\multirow{3}{*}{\rotatebox[origin=c]{90}{\textbf{$\ell_\infty$-PGD}}}  & \multirow{3}{*}{\rotatebox[origin=c]{90}{\footnotesize\textbf{ImageNet-100}}}
				&  Adv. \textsc{Craig} (Ours)       & $65.52$ (\textcolor{red}{$-1.58$})            & $40.00$ (\textcolor{red}{$-0.52$})            &  $420.81$ (\textcolor{darkgreen}{$-434.11$})\\
				&&& Adv. \textsc{GradMatch} (Ours)  & $65.18$ (\textcolor{red}{$-1.92$})            & $40.72$ (\textcolor{darkgreen}{$+0.20$})      &  $403.66$ (\textcolor{darkgreen}{$-451.26$})\\
				&&& Full Adv. Training              & $67.10$                                       & $40.52$                                       &  $854.92$\\
				\bottomrule
			\end{tabular}
		\end{small}
	\end{center}
\end{table*}

\textcolor{revision}{In our first set of experiments, we train well-known neural network classifiers on CIFAR-10, SVHN, and ImageNet-100 datasets using TRADES, $\ell_\infty$ and $\ell_2$-PGD adversarial training objectives.}
In each case, we set the training hyper-parameters such as the learning rate, the number of epochs, and attack parameters.
Then, we train the network using the entire training data and our adversarial coreset selection approach.
For our approach, we use batch-wise versions of \textsc{Craig}~\citep{mirzasoleiman2020craig} and \textsc{GradMatch}~\citep{killamsetty2021gradmatch} with warm-start.
We set the \textit{coreset size} (the percentage of training data to be selected) to 50\% for CIFAR-10 and ImageNet-100, and 30\% for SVHN to get a reasonable balance between accuracy and training time.
We report the clean and robust accuracy (in \%) as well as the total training time (in minutes) in \Cref{tab:TRADES_lp_PGD}.
For our approach, we also report the difference with full training in parenthesis.
In each case, we evaluate the robust accuracy using an attack with similar attributes as the training objective.

As can be seen in \Cref{tab:TRADES_lp_PGD}, in most cases, we reduce the training time by more than a factor of two, while keeping the clean and robust accuracy almost intact.
Note that in these experiments, all the training attributes such as the hyper-parameters, learning rate scheduler, etc.~are the same among different training schemes.
This is important since we want to clearly show the relative boost in performance that one can achieve just by using coreset selection.
Nonetheless, it is likely that by tweaking the hyper-parameters for our approach, one can obtain even better results in terms of clean and robust accuracy.\footnote{\textcolor{revision}{As reported in \citet{qin2019adversarial}, we also observed that TRADES~\citep{zhang2019trades} does not provide optimal results for the ImageNet dataset. Nevertheless, our results are relative, and if a better set of hyper-parameters could be found for this setting, the performance of adversarial coreset selection would also be improved.}}

\subsection{Perceptual Adversarial Training vs. Unseen Attacks}
As discussed in \Cref{sec:background}, PAT~\citep{laidlaw2021pat} replaces the visual similarity measure~$\mathrm{d}(\cdot, \cdot)$ in \Cref{eq:at_functional} with LPIPS~\citep{zhang2018lpips} distance.
The logic behind this choice is that $\ell_p$ norms can only capture a small portion of images similar to the clean one, limiting the search space of adversarial attacks.
Motivated by this reason, \citet{laidlaw2021pat} proposes two different ways of finding the solution to \Cref{eq:at_functional} when $\mathrm{d}(\cdot, \cdot)$ is the LPIPS distance.
The first version uses PGD, and the second is a relaxation of the original problem using the Lagrangian form.
We refer to these two versions as PPGD~(Perceptual PGD) and LPA~(Lagrangian Perceptual Attack), respectively.
Then, \citet{laidlaw2021pat} proposed to utilize a fast version of LPA to enable its efficient usage in adversarial training.
More information on this approach can be found in \citep{laidlaw2021pat}.

\begin{table*}[t!]
	\caption{Clean (ACC) and robust (RACC) accuracy, and total training time (T) of Perceptual Adversarial Training for CIFAR-10 and ImageNet-12 datasets. At inference, the networks are evaluated against five attacks that were not seen during training (Unseen RACC), as well as different versions of Perceptual Adversarial Attack (Seen RACC). In each case, the average is reported. For more information and details about the experiment, please see~Appendices~\ref{ap:sec:imp_det} and \ref{ap:extended}.}
	\label{tab:LPA_sum}
	\begin{center}
		\begin{small}
			\setlength\tabcolsep{0.3em}
			\def\arraystretch{1.75}
			\begin{tabular}{ccccccc}
				\toprule
				\multirow{2}{*}{\rotatebox[origin=c]{90}{\textbf{Data}}}
				& \multirow{2}{*}{\textbf{Training}}
				& \multirow{2}{*}{\textuparrow~\textbf{ACC} (\%)}
				& \multicolumn{2}{c}{\textuparrow~\textbf{RACC} (\%)}
				& \multirow{2}{*}{\textdownarrow~\textbf{T} (mins)}\\
				\cmidrule{4-5}
				& & & Unseen & Seen\\
				\midrule
				\multirow{3}{*}{\rotatebox[origin=c]{90}{\textbf{CIFAR-10}}}
				& Adv. \textsc{Craig} (Ours)             & $83.21$ (\textcolor{red}{$-2.81$})         & $46.55$ (\textcolor{red}{$-1.49$})    & $13.49$ (\textcolor{red}{$-1.83$})    & $767.34$ (\textcolor{darkgreen}{$-915.60$}) \\
				& Adv. \textsc{GradMatch} (Ours)         & $83.14$ (\textcolor{red}{$-2.88$}) 	      & $46.11$ (\textcolor{red}{$-1.93$})    & $13.74$ (\textcolor{red}{$-1.54$})    & $787.26$ (\textcolor{darkgreen}{$-895.68$}) \\
				& Full PAT (Fast-LPA)                    & $86.02$	                                  & $48.04$                               & $15.32$                               & $1682.94$ \\
				\midrule
				\multirow{3}{*}{\rotatebox[origin=c]{90}{\footnotesize{\textbf{ImageNet-12}}}}
				& Adv. \textsc{Craig} (Ours)             & $86.99$ (\textcolor{red}{$-4.23$})         & $53.05$ (\textcolor{red}{$-0.18$})    & $22.56$ (\textcolor{red}{$-0.77$})   & $2817.06$ (\textcolor{darkgreen}{$-2796.06$}) \\
				& Adv. \textsc{GradMatch} (Ours)         & $87.08$ (\textcolor{red}{$-4.14$}) 	      & $53.17$ (\textcolor{red}{$-0.06$})    & $20.74$ (\textcolor{red}{$-2.59$})   & $2865.72$ (\textcolor{darkgreen}{$-2747.40$}) \\
				& Full PAT (Fast-LPA)                    & $91.22$	                                  & $53.23$                               & $23.33$                              & $5613.12$ \\
				\bottomrule
			\end{tabular}
		\end{small}
	\end{center}
\end{table*}

For our next set of experiments, we show how our approach can be adapted to this unusual training objective.
This is done to showcase the compatibility of our proposed method with different training objectives as opposed to existing methods that are carefully tuned for a particular training objective.
To this end, we train ResNet-50 classifiers using Fast-LPA.
In this case, we train the classifiers on CIFAR-10 and ImageNet-12 datasets.
Like our previous experiments, we set the hyper-parameters of the training to be fixed and then train the models using the entire training data and our adversarial coreset selection method.
For our method, we use batch-wise versions of \textsc{Craig}~\citep{mirzasoleiman2020craig} and \textsc{GradMatch}~\citep{killamsetty2021gradmatch} with warm-start.
The \textit{coreset size} for CIFAR-10 and ImageNet-12 were set to 40\% and 50\%, respectively.
As in~\citet{laidlaw2021pat}, we measure the performance of the trained models against unseen attacks during training, as well as the two variants of perceptual attacks.
The unseen attacks for each dataset were selected in a similar manner to~\citet{laidlaw2021pat}, and the attack parameters can be found in~Appendix~\ref{ap:sec:imp_det}.
We also record the total training time taken by each method.

\Cref{tab:LPA_sum} summarizes our results on PAT using Fast-LPA (full results can be found in~Appendix~\ref{ap:extended}).
As seen, our adversarial coreset selection approach can deliver a competitive performance in terms of clean and average unseen attack accuracy while reducing the training time by at least a factor of two.
These results indicate the flexibility of our adversarial coreset selection that can be combined with various objectives.
This is due to the orthogonality of the proposed approach with the existing efficient adversarial training methods.
In this case, we can make Fast-LPA even faster by using our approach.

\subsection{Compatibility with Existing Methods}\label{sec:sec:comaptibility}
\textcolor{revision}{To showcase that our adversarial coreset selection approach is complementary to existing methods, we integrate it with two existing baselines that aim to improve the efficiency of adversarial training.}
\bmhead{Early Termination}
\textcolor{revision}{Going through our results in~\Cref{tab:TRADES_lp_PGD,tab:LPA_sum}, one might wonder what would happen if we decrease the number of training epochs by half.
To perform this experiment, we select the WideResNet-28-10 architecture and train robust neural networks over CIFAR-10 and SVHN datasets.
We set all our hyper-parameters in a similar manner to the ones used for the experiments in~\Cref{tab:TRADES_lp_PGD}, and only halve the number of training epochs.
To make sure that the learning rate is also comparable, we halve the learning rate scheduler epochs as well.
Then, we train the neural networks using $\ell_\infty$ and $\ell_2$-PGD adversarial training.}

\textcolor{revision}{\Cref{tab:early_stopping} shows our results compared to the ones reported in~\Cref{tab:TRADES_lp_PGD}.
As can be seen, adversarial coreset selection obtains a similar performance to using the entire data by consuming 2-3 times less training time.}

\bmhead{Fast Adversarial Training}
\textcolor{revision}{Additionally, we integrate adversarial coreset selection with a stable version of Fast Adversarial Training~(FAT)~\citep{wong2020fast} that does not use cyclic learning rate.}
Specifically, we train a neural network using FAT~\citep{wong2020fast}, and then add adversarial coreset selection to this approach and record the training time and clean and robust accuracy.
We run the experiments on the CIFAR-10 dataset and train a ResNet-18 for each case.
We set our methods' coreset size to 50\%.
The results are shown in \Cref{tab:FAT}.
As seen, our approach can be easily combined with existing methods to deliver faster training.
This is due to the orthogonality of our approach with existing methods that we discussed previously.

\begin{table*}[t!]
    \centering
	\caption{\textcolor{revision}{Clean~(ACC) and robust~(RACC) accuracy, and total training time~(T) of different adversarial training methods over WideResNet-28-10.
	For each method, all the hyper-parameters were kept the same as \Cref{tab:TRADES_lp_PGD}.
    The only exception is that all the epoch-related parameters were halved.
	The difference with full training is shown in parentheses for our proposed approach.
	The information on the computation of RACC in each case is given in~Appendix~\ref{ap:sec:imp_det}.}}
	\label{tab:early_stopping}
	\begin{center}
		\begin{small}
		    \setlength\tabcolsep{0.45em}
			\def\arraystretch{1.25}
			\begin{tabular}{cccccc}
				\toprule
                \multirow{2}{*}{\rotatebox[origin=c]{90}{\footnotesize\textbf{Objective}}}
				&\multirow{2}{*}{\rotatebox[origin=c]{90}{\textbf{Data}}}
				&\multirow{2}{*}{\textbf{Training}}
				&\multicolumn{3}{c}{\textbf{Performance Measures}}\\
				\cmidrule(lr){4-6}
				&&                                 &  \textuparrow~\textbf{ACC} (\%)         & \textuparrow~\textbf{RACC} (\%)         & \textdownarrow~\textbf{T} (mins)\\
				\midrule
				\multirow{3}{*}{\rotatebox[origin=c]{90}{\textbf{$\ell_\infty$-PGD}}} & \multirow{3}{*}{\rotatebox[origin=c]{90}{\footnotesize\textbf{CIFAR-10}}}
				&  Adv. \textsc{Craig} (Ours)      & $83.18$ (\textcolor{red}{$-3.61$})            & $48.90$ (\textcolor{darkgreen}{$+0.04$})            &  $517.39$ (\textcolor{darkgreen}{$-506.70$})\\
				&& Adv. \textsc{GradMatch} (Ours)  & $84.28$ (\textcolor{red}{$-2.51$})            & $48.98$ (\textcolor{darkgreen}{$+0.12$})            &  $523.18$ (\textcolor{darkgreen}{$-500.91$})\\
				&& Full Adv. Training              & $86.79$                                       & $48.86$                                             &  $1024.09$\\
				\midrule
				\multirow{3}{*}{\rotatebox[origin=c]{90}{\textbf{$\ell_2$-PGD}}} & \multirow{3}{*}{\rotatebox[origin=c]{90}{\footnotesize\textbf{SVHN}}}
				&  Adv. \textsc{Craig} (Ours)      & $96.11$ (\textcolor{red}{$-0.02$})                  & $48.91$ (\textcolor{darkgreen}{$+1.22$})             &  $361.24$ (\textcolor{darkgreen}{$-1070.05$})\\
				&& Adv. \textsc{GradMatch} (Ours)  & $96.25$ (\textcolor{darkgreen}{$+0.12$})            & $47.88$ (\textcolor{darkgreen}{$+0.19$})             &  $394.40$ (\textcolor{darkgreen}{$-1036.89$})\\
				&& Full Adv. Training              & $96.13$                                             & $47.69$                                              &  $1431.29$\\
                \bottomrule
			\end{tabular}
		\end{small}
	\end{center}
    \vspace{-0.25in}
\end{table*}

\begin{table*}[t!]
	\caption{Clean~(ACC) and robust~(RACC) accuracy, and average training speed~(\textbf{S}\textsubscript{avg}) of Fast Adversarial Training~\citep{wong2020fast} without and with our adversarial coreset selection on CIFAR-10.
	For our proposed approach, the difference with full training is shown in parentheses.}
	\label{tab:FAT}
	\begin{center}
		\begin{small}
		    \setlength\tabcolsep{0.35em}
			\def\arraystretch{1.25}
			\begin{tabular}{lcccc}
				\toprule
				\multirow{2}{*}{\textbf{Training}}
				&\multicolumn{4}{c}{\textbf{Performance Measures}}\\
				\cmidrule(lr){2-5}
				&  \textuparrow~\textbf{ACC} (\%)                           & \textuparrow~\textbf{RACC} (\%)                           & \textdownarrow~\textbf{S}\textsubscript{avg} (min/epoch)   & \textdownarrow~\textbf{T} (mins)\\
				\midrule
				Fast Adv. Training                 & $86.20$                                       & $47.54$                                       &  $0.5178$  &  $31.068$\\
				~+ Adv. \textsc{Craig} (Ours)      & $82.56$ (\textcolor{red}{$-3.64$})            & $47.77$ (\textcolor{darkgreen}{$+0.23$})      &  $0.2783$  &  $16.695$ (\textcolor{darkgreen}{$-14.373$})\\
				~+ Adv. \textsc{GradMatch} (Ours)   & $82.53$ (\textcolor{red}{$-3.67$})           & $47.88$ (\textcolor{darkgreen}{$+0.34$})      &  $0.2737$  &  $16.419$ (\textcolor{darkgreen}{$-14.649$})\\
				\bottomrule
			\end{tabular}
		\end{small}
	\end{center}
\end{table*}
    
\begin{figure}
	    \centering
        \includegraphics[width=0.90\linewidth]{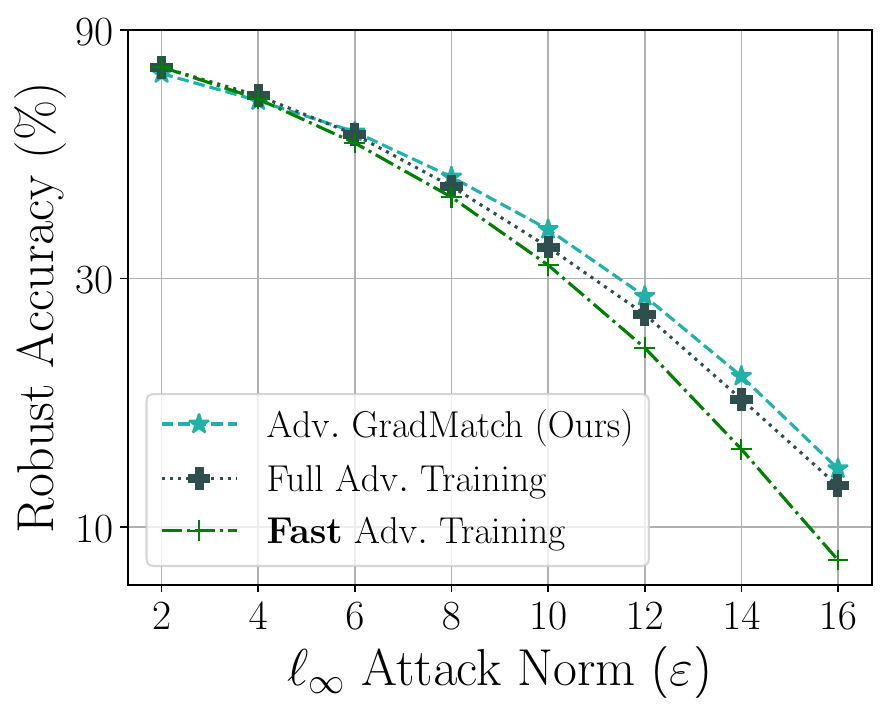}
        \caption{\textcolor{revision}{Robust accuracy as a function of $\ell_\infty$ attack norm. We train neural networks with a perturbation norm of $\norm{\varepsilon}_\infty \leq 8$ on CIFAR-10. At inference, we evaluate the robust accuracy against PGD-50 with various attack strengths.}}
		\label{fig:fat}
\end{figure}

Moreover, we show that adversarial coreset selection gives a better approximation to $\ell_\infty$-PGD adversarial training compared to using FGSM~\citep{goodfellow2014explaining} as done in FAT~\citep{wong2020fast}.
To this end, we use our adversarial \textsc{GradMatch} to train neural networks with the original $\ell_\infty$-PGD objective.
We also train these networks using FAT~\citep{wong2020fast} that uses FGSM.
We train neural networks with a perturbation norm of $\norm{\varepsilon}_\infty \leq 8$.
Then, we evaluate the trained networks against PGD-50 adversarial attacks with different attack strengths to see how each network generalizes to unseen perturbations.
As seen in \Cref{fig:fat}, adversarial coreset selection is a closer approximation to $\ell_\infty$-PGD compared to FAT~\citep{wong2020fast}.
This indicates the success of the proposed approach in retaining the characteristics of the original objective as opposed to existing methods like~\citep{wong2020fast,andriushchenko2020understanding}.

\begin{table*}[tb!]
		\caption{Performance of $\ell_\infty$-PGD. In ``Half-Half'', we mix half adversarial coreset selection samples with another half of clean samples and train a neural network similar to~\citep{tsipras2019robustness}.
		In ``ONLY-Core'' we just use adversarial coreset samples. Settings are given in \Cref{tab:hyperI}.
		The results are averaged over 5 runs.}
		\label{tab:HH_PGD}
		\begin{center}
			\begin{small}
				\setlength\tabcolsep{0.45em}
				\def\arraystretch{1.1}
				\begin{tabular}{cccccccc}
					\toprule
					\multirow{2}{*}{\textbf{Training}}  &  \multicolumn{2}{c}{\textuparrow~\textbf{Clean} (\%)}     & \multicolumn{2}{c}{\textuparrow~\textbf{RACC} (\%)} & \multicolumn{2}{c}{\textuparrow~\textbf{T} (mins)}\\
					\cmidrule(lr){2-3}\cmidrule(lr){4-5}\cmidrule(lr){6-7}
					                                    & \textbf{ONLY Core} & \textbf{Half-Half}                   & \textbf{ONLY Core} & \textbf{Half-Half}             & \textbf{ONLY Core} & \textbf{Half-Half}\\
					\midrule
					Adv. \textsc{Craig}                 & $80.36$    & $84.43$                                      & $45.07$  & $39.83$         & $148.01$ & $152.34$\\
					Adv. \textsc{GradMatch}             & $80.67$    & $84.31$                                      & $45.23$  & $40.05$         & $148.03$ & $153.18$\\
					Full Adv. Training                  & \multicolumn{2}{c}{$83.14$}                               & \multicolumn{2}{c}{$41.39$}  & \multicolumn{2}{c}{$292.87$}     \\
					\bottomrule
				\end{tabular}
			\end{small}
		\end{center}
\end{table*}
\begin{figure*}
	    \centering
        \includegraphics[width=0.70\linewidth]{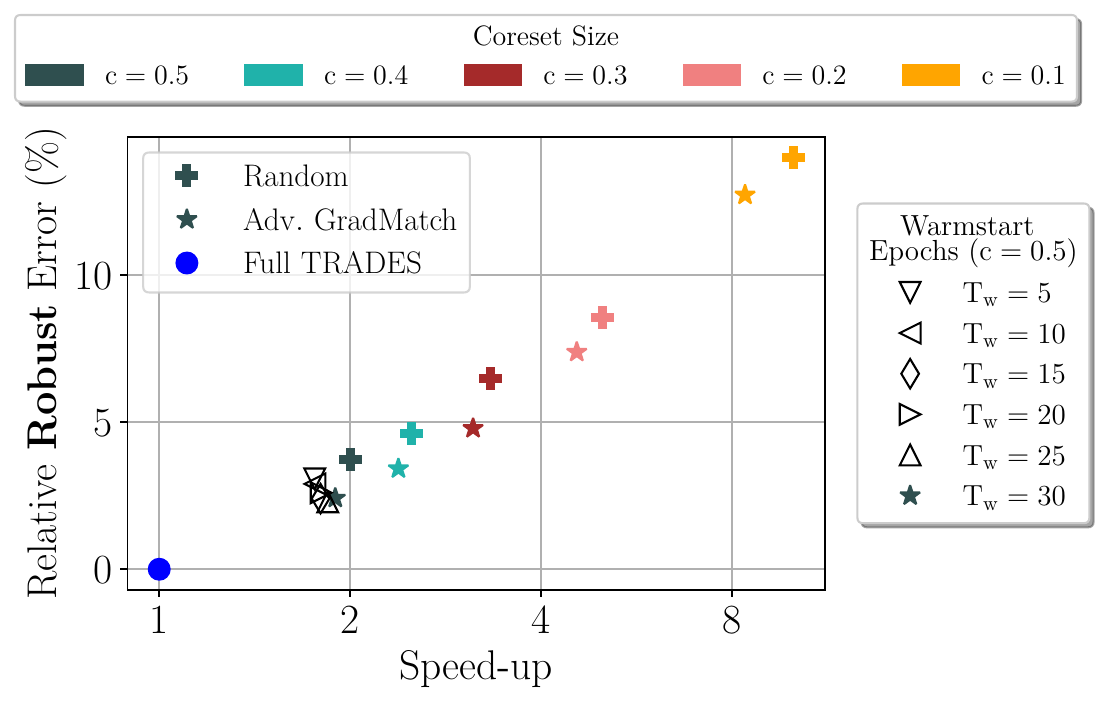}
        \caption{Relative robust error vs. speed up for TRADES. For a given subset size, we compare our adversarial coreset selection (\textsc{GradMatch}) against random data selection. Furthermore, we show our results for a selection of different warm-start settings.}
		\label{fig:rand}
\end{figure*}

\subsection{Ablation Studies}
In this section, we perform a few ablation studies to examine the effectiveness of our adversarial coreset selection method.
In our first set of experiments, we compare a random data selection with adversarial \textsc{GradMatch}.
\Cref{fig:rand} shows that for any given coreset size, our adversarial coreset selection method results in a lower robust error.
Furthermore, we modify the warm-start epochs for a fixed coreset size of 50\%.
The proposed method is not very sensitive to the number of warm-start epochs, although a longer warm-start is generally beneficial.

\begin{table*}[t!]
    \centering
	\caption{\textcolor{revision}{Clean~(ACC) and robust~(RACC) accuracy, and total training time~(T) of $\ell_\infty$-PGD adversarial training over CIFAR-10 for WideResNet-28-10 architecture.
	For each method, all the hyper-parameters were kept the same as \Cref{tab:early_stopping}.
    The frequency column indicates the number of epoch that we wait to update the coreset using~\Cref{alg:adv_core}.
	The information on the computation of RACC in each case is given in~Appendix~\ref{ap:sec:imp_det}.}}
	\label{tab:freq}
	\begin{center}
		\begin{small}
		    \setlength\tabcolsep{0.45em}
			\def\arraystretch{1.25}
			\begin{tabular}{ccccc}
				\toprule
                \multirow{2}{*}{\rotatebox[origin=c]{90}{\shortstack{\textbf{Freq.}\\(epoch)}}}
                &\multirow{2}{*}{\textbf{Training}}
				&\multicolumn{3}{c}{\textbf{Performance Measures}}\\
				\cmidrule(lr){3-5}
				&                                &  \textuparrow~\textbf{ACC} (\%)         & \textuparrow~\textbf{RACC} (\%)         & \textdownarrow~\textbf{T} (mins)\\
				\midrule
                \multirow{2}{*}{\rotatebox[origin=c]{90}{1}}
				&Adv. \textsc{Craig} (Ours)      & $84.55$ (\textcolor{red}{$-2.24$})          & $50.06$ (\textcolor{darkgreen}{$+1.20$})         &  $576.42$ (\textcolor{darkgreen}{$1.77\times$}) \\
				&Adv. \textsc{GradMatch} (Ours)  & $83.20$ (\textcolor{red}{$-3.59$})          & $49.77$ (\textcolor{darkgreen}{$+0.91$})         &  $582.37$ (\textcolor{darkgreen}{$1.76\times$}) \\
				\midrule
                \multirow{2}{*}{\rotatebox[origin=c]{90}{5}}
				&Adv. \textsc{Craig} (Ours)      & $83.77$ (\textcolor{red}{$-3.02$})          & $50.25$ (\textcolor{darkgreen}{$+1.39$})         &  $522.98$ (\textcolor{darkgreen}{$1.96\times$}) \\
				&Adv. \textsc{GradMatch} (Ours)  & $83.92$ (\textcolor{red}{$-2.87$})          & $49.12$ (\textcolor{darkgreen}{$+0.26$})         &  $528.24$ (\textcolor{darkgreen}{$1.94\times$}) \\
                \midrule
                \multirow{2}{*}{\rotatebox[origin=c]{90}{10}}
    			&Adv. \textsc{Craig} (Ours)      & $83.18$ (\textcolor{red}{$-3.61$})          & $48.90$ (\textcolor{darkgreen}{$+0.04$})         &  $517.40$ (\textcolor{darkgreen}{$1.98\times$}) \\
				&Adv. \textsc{GradMatch} (Ours)  & $84.28$ (\textcolor{red}{$-2.51$})          & $48.98$ (\textcolor{darkgreen}{$+0.12$})         &  $523.18$ (\textcolor{darkgreen}{$1.96\times$}) \\
                \midrule
				- &Full Adv. Training              & $86.79$                                     & $48.86$                                          &  $1024.09$\\
                \bottomrule
			\end{tabular}
		\end{small}
	\end{center}
\end{table*}

In another comparison, we run an experiment similar to that of \citet{tsipras2019robustness}.
Specifically, we minimize the average of adversarial and vanilla training in each epoch.
The non-coreset data is treated as clean samples to minimize the vanilla objective, while for the coreset samples, we would perform adversarial training.
\Cref{tab:HH_PGD} shows the results of this experiment.
As seen, adding the non-coreset data as clean inputs to the training improves the clean accuracy while decreasing the robust accuracy.
These results align with the observations of \citet{tsipras2019robustness} around the existence of a trade-off between clean and robust accuracy.

\textcolor{revision}{Next, we investigate the effect of adversarial coreset selection frequency.
Remember from~\Cref{sub:sub:practical} where we argued that performing adversarial coreset selection every $T$ epochs would help with the speed-up.
However, one must note that setting $T$ to a large number might come at the cost of sacrificing clean and robust accuracy.
To show this, we perform our early stopping experiments from~\Cref{tab:early_stopping} with different coreset renewal frequencies.
Our results are given in~\Cref{tab:freq}.
As can be seen, decreasing coreset selection renewal frequency would be helpful in gaining more speed-up but it could hurt the overall model performance.}

Finally, we study the accuracy vs.~speed-up trade-off in different versions of adversarial coreset selection.
For this study, we train our adversarial coreset selection method using different versions of \textsc{Craig}~\citep{mirzasoleiman2020craig} and \textsc{GradMatch}~\citep{killamsetty2021gradmatch} on CIFAR-10 using TRADES.
In particular, for each method, we start with the base algorithm and add the batch-wise selection and warm-start one by one.
Also, to capture the effect of the coreset size, we vary this number from 50\% to 10\%.
\Cref{fig:tradeoff} shows the clean and robust error vs.~speed-up compared to full adversarial training.
In each case, the combination of warm-start and batch-wise versions of the adversarial coreset selection gives the best performance.
Moreover, as we gradually decrease the coreset size, the training speed goes up.
However, this gain in training speed is achieved at the cost of increasing the clean and robust error.
Both of these observations align with that of~\citet{killamsetty2021gradmatch} around vanilla coreset selection.

\begin{figure*}[t!]
	\centering
	\begin{subfigure}{.45\textwidth}
		\centering
		\includegraphics[width=1.0\textwidth]{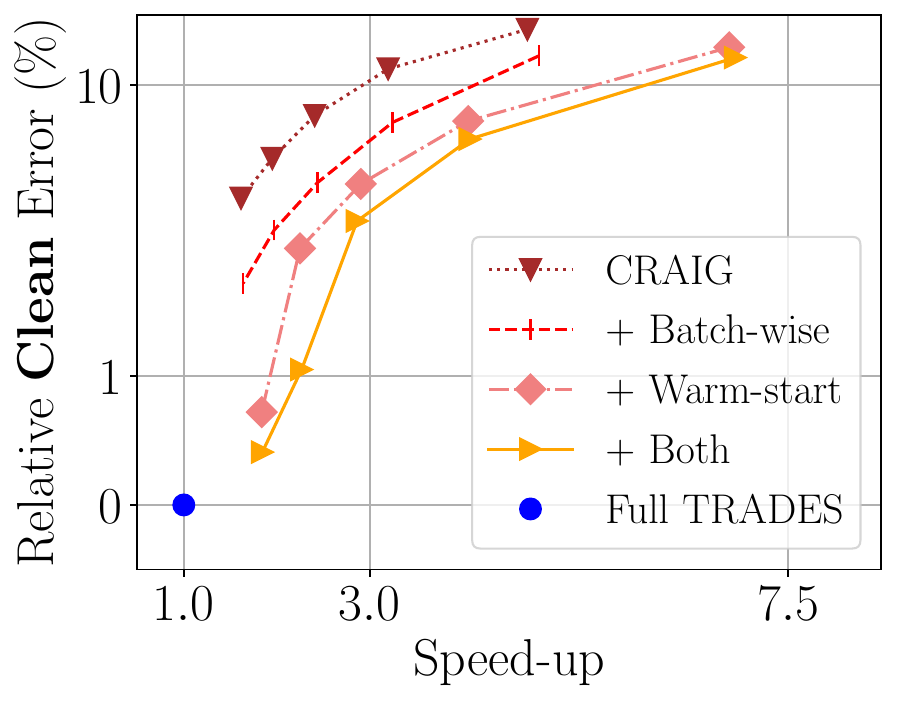}
		\caption{}
		\label{fig:tradeoff:craig_clean}
	\end{subfigure}\hspace*{2em}
	\begin{subfigure}{.45\textwidth}
		\centering
		\includegraphics[width=1.0\textwidth]{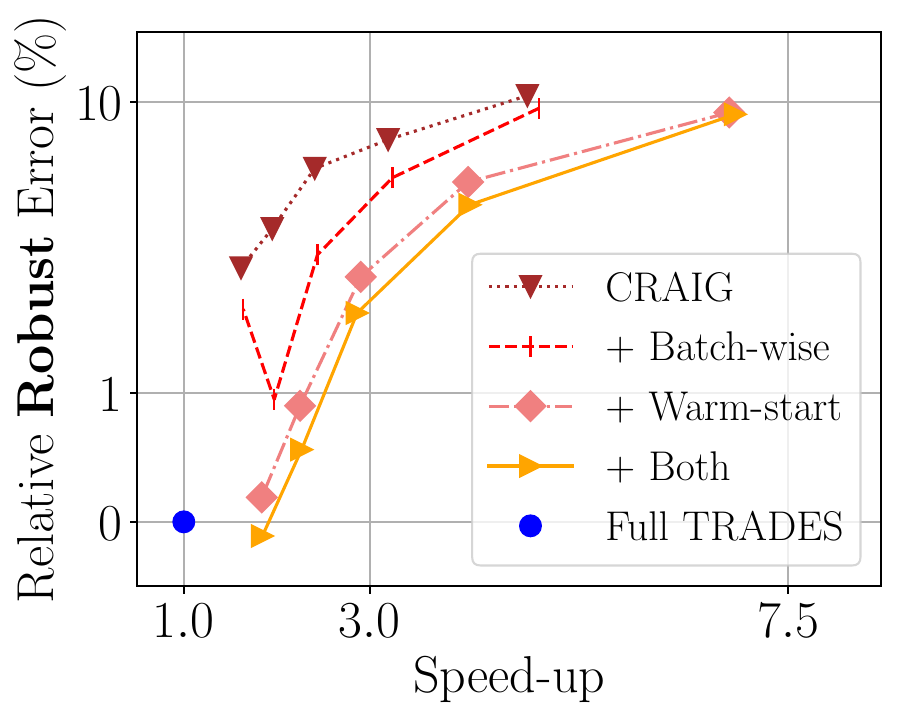}
		\caption{}
		\label{fig:tradeoff:craig_robust}
	\end{subfigure}\\\vspace*{0.2em}
	\begin{subfigure}{.45\textwidth}
		\centering
		\includegraphics[width=1.0\textwidth]{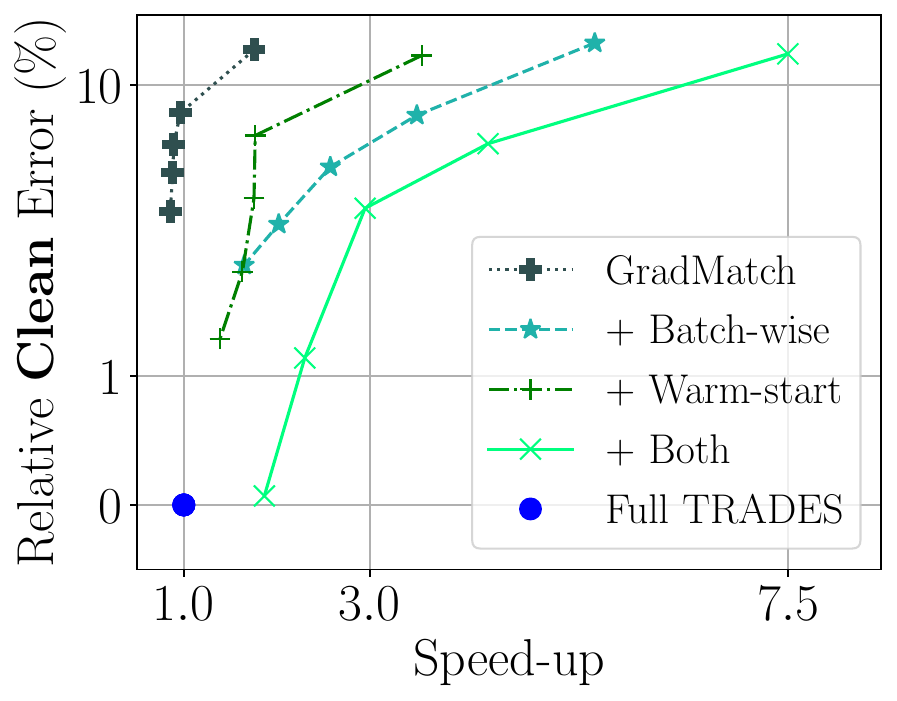}
		\caption{}
		\label{fig:tradeoff:gradmatch_clean}
	\end{subfigure}\hspace*{2em}
	\begin{subfigure}{.45\textwidth}
		\centering
		\includegraphics[width=1.0\textwidth]{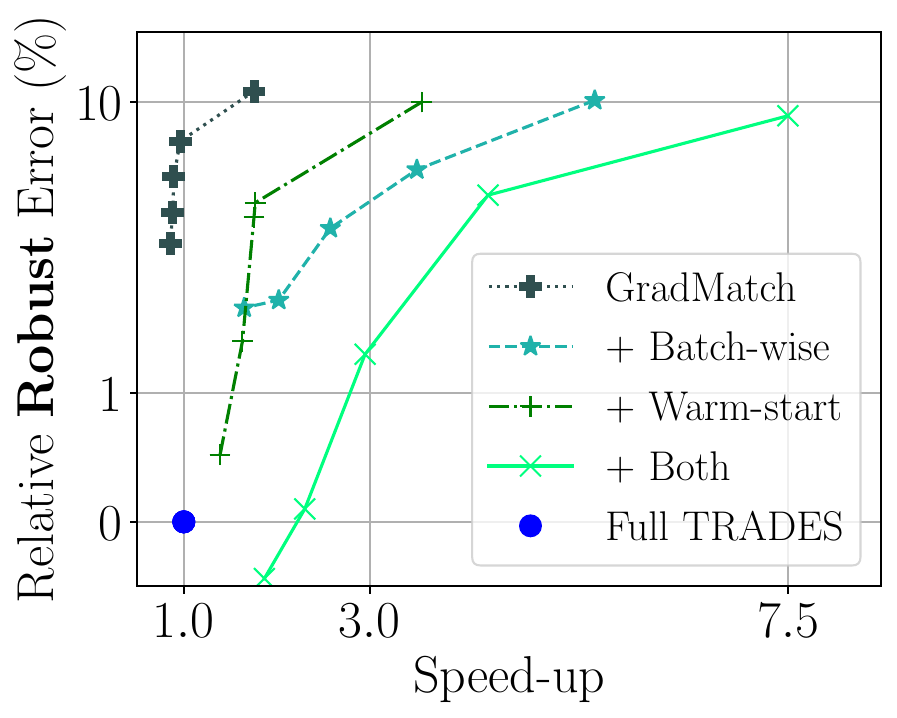}
		\caption{}
		\label{fig:tradeoff:gradmatch_robust}
	\end{subfigure}\hspace*{0.2em}
	\caption{\textcolor{revision}{Relative error vs. speed up curves for different versions of adversarial coreset selection in training CIFAR-10 models using the TRADES objective. In each curve, the coreset size is changed from 50\% to 10\% (left to right). (a, b) Clean and robust error vs.~speed up compared to full TRADES for different versions of Adversarial \textsc{Craig}. (c, d) Clean and robust error vs.~speed up compared to full TRADES for different versions of Adversarial \textsc{GradMatch}.}}
	\label{fig:tradeoff}
	\vskip -0.05 in
\end{figure*}

\section{Conclusion}\label{sec:conclusion}
In this paper, we proposed a general yet principled approach for efficient adversarial training based on the theory of coreset selection.
We discussed how repetitive computation of adversarial attacks for the entire training data could impede the training speed.
Unlike previous works that try to solve this issue by making the adversarial attack more straightforward, here, we took an orthogonal path to reduce the training set size without modifying the attacker. 
We first provided convergence bounds for adversarial training using a subset of the training data.
Our analysis showed that the convergence bound is related to how well this selected subset can approximate the loss gradient computed with the entire data.
Based on this study, we proposed to use the gradient approximation error as our coreset selection objective and tried to make a connection with vanilla coreset selection.
To this end, we discussed how coreset selection could be viewed as a two-step process, where first, the gradients for the entire training data are computed.
Then greedy solvers choose a weighted subset of data that can approximate the full gradient.
Using Danskin's theorem, we drew a connection between greedy coreset selection algorithms and adversarial training.
We then showed the flexibility of our adversarial coreset selection method by utilizing it for TRADES, $\ell_p$-PGD, and Perceptual Adversarial Training.
Our experimental results indicate that adversarial coreset selection can reduce the training time by more than 2-3 while slightly reducing the clean and robust accuracy.

\backmatter
\bmhead{Acknowledgments}
This research was undertaken using the LIEF HPC-GPG\-PU Facility hosted at the University of Melbourne.
This Facility was established with the assistance of LIEF Grant LE170100200.
Sarah Erfani is in part supported by Australian Research Council~(ARC) Discovery Early Career Researcher Award~(DECRA)~DE220100680.
Moreover, this research was partially supported by the ARC Centre of Excellence for Automated Decision-Making and Society~(CE200100005), and funded partially by the Australian Government through the Australian Research Council.

\newpage
\begin{appendices}
\onecolumn
\section{Greedy Subset Selection Algorithms}\label{ap:sec:greedy_selection}
	
This section briefly reviews the technical details of greedy subset selection algorithms used in our experiments.
Further details can be found in \citep{mirzasoleiman2020craig,killamsetty2021gradmatch}.

\subsection{\textsc{CRAIG}}\label{ap:sec:sec:craig}
As discussed previously, the goal of coreset selection is to find a subset of the training data such that the weighted gradient computed over this subset can give a good approximation to the full gradient.
Thus, \textsc{Craig}~\citep{mirzasoleiman2020craig} starts with explicitly writing down this objective as:
\begin{equation}\label{eq:gradient_app_craig}
\norm{\sum_{i \in \mathcal{V}} \nabla_{\boldsymbol{\theta}}\boldsymbol{\Phi} \left(f_{\boldsymbol{\theta}}; \boldsymbol{x}_{i}, y_{i}\right)-\sum_{j \in \mathcal{S}} \gamma_{j}\nabla_{\boldsymbol{\theta}}\boldsymbol{\Phi} \left(f_{\boldsymbol{\theta}}; \boldsymbol{x}_{j}, y_{j}\right)}.
\end{equation}
Here, $\mathcal{V}=\left[n\right]=\left\{1, 2, \dots, n\right\}$ denotes the training set.
The goal is to find a coreset $S \subseteq \mathcal{V}$ and its associated weights $\gamma_{j}$ such that the objective of \Cref{eq:gradient_app_craig} is minimized.
To this end, \citet{mirzasoleiman2020craig} find an upper-bound on the estimation error of \Cref{eq:gradient_app_craig}.
This way, it is shown that the coreset selection objective can be approximated by:
\begin{equation}\label{eq:craig_objective}
\mathcal{S}^{*}=\argmin_{\mathcal{S} \subseteq \mathcal{V}}\lvert\mathcal{S}\rvert, ~ \text {s.t.} ~ L(\mathcal{S}) \coloneqq \sum_{i \in \mathcal{V}} \min _{j \in \mathcal{S}} d_{i j} \leq \epsilon,
\end{equation}
where
\begin{equation}\label{eq:craig_dij}
d_{i j} \coloneqq \max _{\boldsymbol{\theta} \in \boldsymbol{\Theta}} \norm{\nabla_{\boldsymbol{\theta}}\boldsymbol{\Phi} \left(f_{\boldsymbol{\theta}}; \boldsymbol{x}_{i}, y_{i}\right)-\nabla_{\boldsymbol{\theta}}\boldsymbol{\Phi} \left(f_{\boldsymbol{\theta}}; \boldsymbol{x}_{j}, y_{j}\right)}
\end{equation}
denotes the maximum pairwise gradient distances computed for all $i \in \mathcal{V}$ and $j \in S$.
Then, \citet{mirzasoleiman2020craig} cast \Cref{eq:craig_objective} as the well-known \textit{submodular set cover problem} for which greedy solvers exist~\citep{minoux1978accelerated,nemhauser1978analysis,wolsey1982greedy}.
\Cref{alg:craig} shows this greedy selection algorithm.

\subsection{\textsc{GradMatch}}\label{ap:sec:sec:gradmatch}
\citet{killamsetty2021gradmatch} studies the convergence of adaptive data subset selection algorithms using \textit{stochastic gradient descent}~(SGD).
It is shown that the convergence bound consists of two terms: an irreducible noise-related term, and an additional gradient error term just like \Cref{eq:gradient_app_craig}.
Based on this analysis, \citet{killamsetty2021gradmatch} proposes to minimize this objective directly.
To this end, they use the famous Orthogonal Matching Pursuit~(OMP)~\citep{pati1992omp, elenberg2016restricted} as their greedy solver, resulting in an algorithm called \textsc{Grad-Match}.
It is then proved that since \textsc{GradMatch} minimizes the \Cref{eq:gradient_app_craig} objective directly, it achieves a lower error compared to \textsc{Craig} that only minimizes an upper-bound to \Cref{eq:gradient_app_craig}.
This algorithm is shown in \Cref{alg:gradmatch}.

\begin{algorithm*}[tb!]
	\caption{\textsc{Craig}~\citep{mirzasoleiman2020craig} \label{alg:craig}} 
	\textbf{Input}: Dataset $\{\boldsymbol{x}_{i}, y_{i}\}_{i=1}^{n}$ with indices $\mathcal{V}=\{1, 2, \dots, n\}$, the learning objective $\boldsymbol{\Phi}$ over classifier $f_{\boldsymbol{\theta}}$ with parameter set $\boldsymbol{\theta}.$
	\\
	\textbf{Output}: The subset of data instances $\mathcal{S} \subseteq \mathcal{V}$ with their corresponding weights $\{\gamma_{j}\}_{j \in \mathcal{S}}.$
	\\
	\textbf{Parameters}: Tolerance~$\varepsilon$. 
	\begin{algorithmic}[1]
	    \setstretch{1.4}
		\State Initialization: $\mathcal{S} \leftarrow \emptyset$, $s_{0} = 0$.
		\State Define: $F(\mathcal{S}) \coloneqq L(\{s_{0}\}) - L(\mathcal{S} \cup \{s_{0}\})$.
		\While {$F(\mathcal{S}) < L(\{s_{0}\}) - \varepsilon$}
			\State $j \in \argmax_{e \in \mathcal{V} \backslash \mathcal{S}} F\left(\mathcal{S} \cup \left\{e\right\}\right) - F\left(\mathcal{S}\right)$.
			\State $\mathcal{S} = \mathcal{S} \cup \{j\}$.
		\EndWhile
		\For {$j = 1$ to $\lvert\mathcal{S}\rvert$}
		    \State $\gamma_{j} = \sum_{i \in \mathcal{V}} \mathbb{I}\left[j = \argmin_{s \in \mathcal{S}} \max_{\boldsymbol{\theta} \in \boldsymbol{\Theta}}\norm{\nabla\boldsymbol{\Phi} \left(f_{\boldsymbol{\theta}}; \boldsymbol{x}_{i}, y_{i}\right)-\nabla\boldsymbol{\Phi} \left(f_{\boldsymbol{\theta}}; \boldsymbol{x}_{j}, y_{j}\right)}\right]$.
		\EndFor
		\State \Return $\mathcal{S}$, $\boldsymbol{\gamma}$.
	\end{algorithmic}
\end{algorithm*}

\begin{algorithm*}[tb!]
	\caption{OMP selection algorithm used in \textsc{GradMatch}~\citep{killamsetty2021gradmatch} \label{alg:gradmatch}} 
	\textbf{Input}: Dataset $\{\boldsymbol{x}_{i}, y_{i}\}_{i=1}^{n}$ with indices $\mathcal{V}=\{1, 2, \dots, n\}$, the learning objective $\boldsymbol{\Phi}$ over classifier $f_{\boldsymbol{\theta}}$ with parameter set $\boldsymbol{\theta}.$
	\\
	\textbf{Output}: The subset of data instances $\mathcal{S} \subseteq \mathcal{V}$ with their corresponding weights $\{\gamma_{j}\}_{j \in \mathcal{S}}.$
	\\
	\textbf{Parameters}: Tolerance~$\varepsilon$, regularization coefficient~$\lambda$, coreset size~$k$. 
	\begin{algorithmic}[1]
	    \setstretch{1.4}
	    \State Initialization: $\mathcal{S} \leftarrow \emptyset$, $\bar{\boldsymbol{\gamma}} \leftarrow \boldsymbol{0}_{n}$.
	    \State Define: $\Gamma \left(\mathcal{V}, \bar{\boldsymbol{\gamma}}, \boldsymbol{\theta}\right) = \left\|\sum_{i \in \mathcal{V}} \nabla\boldsymbol{\Phi}\left(f_{\boldsymbol{\theta}}; \boldsymbol{x}_{i}, y_{i}\right)-\sum_{j \in \mathcal{V}} \bar{\gamma}_{j} \nabla \boldsymbol{\Phi}\left(f_{\boldsymbol{\theta}}; \boldsymbol{x}_{j}, y_{j}\right)\right\| + \lambda \norm{\bar{\boldsymbol{\gamma}}}^{2}$.
		\State Set: $\boldsymbol{r} \leftarrow \nabla_{\bar{\boldsymbol{\gamma}}}{\Gamma \left(\mathcal{V}, \bar{\boldsymbol{\gamma}}, \boldsymbol{\theta}\right)}\rvert_{\bar{\boldsymbol{\gamma}} = \boldsymbol{0}}$.
		\While {$\lvert\mathcal{S}\rvert \leq k$ and  $\Gamma \left(\mathcal{V}, \bar{\boldsymbol{\gamma}}, \boldsymbol{\theta}\right) \geq \varepsilon$}
			\State $j \in \argmax_{i \in \mathcal{V}}  \lvert r_{i} \rvert$.
			\State $\mathcal{S} = \mathcal{S} \cup \{j\}$.
			\State $\bar{\boldsymbol{\gamma}}[\mathcal{S}] \leftarrow \argmin_{\bar{\boldsymbol{\gamma}}: \mathrm{supp}(\bar{\boldsymbol{\gamma}}) \subseteq \mathcal{S}} \Gamma \left(\mathcal{V}, \bar{\boldsymbol{\gamma}}, \boldsymbol{\theta}\right)$.
			\State $\boldsymbol{r} \leftarrow \nabla_{\bar{\boldsymbol{\gamma}}}{\Gamma \left(\mathcal{V}, \bar{\boldsymbol{\gamma}}, \boldsymbol{\theta}\right)}$.
		\EndWhile
		\State $\boldsymbol{\gamma} = \bar{\boldsymbol{\gamma}}[\mathcal{S}]$.
		\State \Return $\mathcal{S}$, $\boldsymbol{\gamma}$.
	\end{algorithmic}
\end{algorithm*}

\section{Proofs}\label{ap:proofs}

\subsection{TRADES Gradient}\label{ap:trades_gradient}
To compute the \textit{second} gradient term in \Cref{eq:nn_gradeint_trades_penufinal} let us assume that $\boldsymbol{w}(\boldsymbol{\theta}) = f_{\boldsymbol{\theta}}(\boldsymbol{x}_{\rm{adv}})$ and ${\boldsymbol{z}(\boldsymbol{\theta}) = f_{\boldsymbol{\theta}}(\boldsymbol{x})}$.
We can write the aforementioned gradient as:
\begin{small}
		\begin{align}\nonumber
		&\nabla_{\boldsymbol{\theta}} \mathcal{L}_{\mathrm{CE}}\left(f_{\boldsymbol{\theta}}(\boldsymbol{x}_{\rm{adv}}), f_{\boldsymbol{\theta}}(\boldsymbol{x})\right)\\\nonumber 
		&\qquad = \nabla_{\boldsymbol{\theta}} \mathcal{L}_{\mathrm{CE}}\left(\boldsymbol{w}(\boldsymbol{\theta}), \boldsymbol{z}(\boldsymbol{\theta})\right)\\\nonumber
		&\qquad\stackrel{(1)}{=} \tfrac{\partial \mathcal{L}_{\mathrm{CE}}}{\partial \boldsymbol{w}} \nabla_{\boldsymbol{\theta}}\boldsymbol{w}(\boldsymbol{\theta}) + \tfrac{\partial \mathcal{L}_{\mathrm{CE}}}{\partial \boldsymbol{z}} \nabla_{\boldsymbol{\theta}}\boldsymbol{z}(\boldsymbol{\theta})\\\nonumber
		&\qquad\stackrel{(2)}{=} \nabla_{\boldsymbol{\theta}} \mathcal{L}_{\mathrm{CE}}\left(f_{\boldsymbol{\theta}}(\boldsymbol{x}_{\rm{adv}}), \text{\texttt{freeze}}\left(f_{\boldsymbol{\theta}}(\boldsymbol{x})\right)\right) \\ \label{eq:trades_chain_rule_ap}
		&\qquad\quad + \nabla_{\boldsymbol{\theta}} \mathcal{L}_{\mathrm{CE}}\left(\text{\texttt{freeze}}\left(f_{\boldsymbol{\theta}}(\boldsymbol{x}_{\rm{adv}})\right), f_{\boldsymbol{\theta}}(\boldsymbol{x})\right).
		\end{align}
\end{small}

\noindent Here, step (1) is derived using the multi-variable chain rule. 
Also, step (2) is the re-writing of step (1) by using the $\text{\texttt{freeze}}(\cdot)$ kernel that stops the gradients from backpropagating through its argument function.

\subsection{Convergence Proofs}\label{ap:convergence_proofs}
First, we establish the relationship between the Lipschitzness and strong convexity of the (loss) function $\mathcal{L}$ and its max-function $\max\mathcal{L}$.
To this end, we prove the following two lemmas.
These results will be an essential part of the main convergence theorem proof.
Note that our results in the section follow our notations established in \Cref{sec:sec:convergence}.

\begin{lemma}\label{lipschitz_lemma}
    Let $\mathcal{K}$ be a nonempty compact topological space, $\mathcal{L}: \mathbb{R}^{m} \times \mathcal{K} \rightarrow \mathbb{R}$ be such that $\mathcal{L}(\cdot, \boldsymbol{\delta})$ is Lipschitz continuous for every $\boldsymbol{\delta} \in \mathcal{K}$.
	Then, the corresponding max-function
	$$
	\phi(\boldsymbol{\theta})=\max _{\delta \in \mathcal{K}} \mathcal{L}(\boldsymbol{\theta}, \boldsymbol{\delta})
	$$
	is also Lipschitz continuous.
\end{lemma}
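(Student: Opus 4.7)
The plan is to prove the inequality $|\phi(\boldsymbol{\theta}_1) - \phi(\boldsymbol{\theta}_2)| \leq \sigma \|\boldsymbol{\theta}_1 - \boldsymbol{\theta}_2\|$ directly from the definition of $\phi$, by comparing the two max-values through a common test point. I will assume (as the context implicitly requires) that the Lipschitz constant $\sigma$ of $\mathcal{L}(\cdot, \boldsymbol{\delta})$ is uniform in $\boldsymbol{\delta} \in \mathcal{K}$, and that the maxima defining $\phi(\boldsymbol{\theta}_1)$ and $\phi(\boldsymbol{\theta}_2)$ are attained on the compact set $\mathcal{K}$.

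First I would fix arbitrary $\boldsymbol{\theta}_1, \boldsymbol{\theta}_2 \in \mathbb{R}^{m}$ and pick maximizers $\boldsymbol{\delta}_1^\star \in \arg\max_{\boldsymbol{\delta} \in \mathcal{K}} \mathcal{L}(\boldsymbol{\theta}_1, \boldsymbol{\delta})$ and $\boldsymbol{\delta}_2^\star \in \arg\max_{\boldsymbol{\delta} \in \mathcal{K}} \mathcal{L}(\boldsymbol{\theta}_2, \boldsymbol{\delta})$. Since $\boldsymbol{\delta}_1^\star$ is only a feasible (not necessarily optimal) candidate at $\boldsymbol{\theta}_2$, one has $\phi(\boldsymbol{\theta}_2) \geq \mathcal{L}(\boldsymbol{\theta}_2, \boldsymbol{\delta}_1^\star)$. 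This gives the chain
\begin{equation*}
\phi(\boldsymbol{\theta}_1) - \phi(\boldsymbol{\theta}_2) \leq \mathcal{L}(\boldsymbol{\theta}_1, \boldsymbol{\delta}_1^\star) - \mathcal{L}(\boldsymbol{\theta}_2, \boldsymbol{\delta}_1^\star) \leq \sigma \|\boldsymbol{\theta}_1 - \boldsymbol{\theta}_2\|,
\end{equation*}
where the last step uses the Lipschitz hypothesis on $\mathcal{L}(\cdot, \boldsymbol{\delta}_1^\star)$.

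By symmetry, swapping the roles of $\boldsymbol{\theta}_1$ and $\boldsymbol{\theta}_2$ (and using $\boldsymbol{\delta}_2^\star$ as the common test point) yields $\phi(\boldsymbol{\theta}_2) - \phi(\boldsymbol{\theta}_1) \leq \sigma \|\boldsymbol{\theta}_1 - \boldsymbol{\theta}_2\|$. Combining both directions gives $|\phi(\boldsymbol{\theta}_1) - \phi(\boldsymbol{\theta}_2)| \leq \sigma \|\boldsymbol{\theta}_1 - \boldsymbol{\theta}_2\|$, which is precisely Lipschitz continuity of $\phi$ with the same constant $\sigma$.

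The only nontrivial point — and really the only possible obstacle — is well-definedness: one must know that the maximizers $\boldsymbol{\delta}_1^\star, \boldsymbol{\delta}_2^\star$ actually exist. Compactness of $\mathcal{K}$ together with continuity of $\mathcal{L}(\boldsymbol{\theta}, \cdot)$ suffices by the extreme value theorem; if only Lipschitzness in $\boldsymbol{\theta}$ is assumed, one can instead work with $\varepsilon$-suboptimal points $\boldsymbol{\delta}_i^\varepsilon$ satisfying $\mathcal{L}(\boldsymbol{\theta}_i, \boldsymbol{\delta}_i^\varepsilon) \geq \phi(\boldsymbol{\theta}_i) - \varepsilon$, carry out the same argument to obtain $|\phi(\boldsymbol{\theta}_1) - \phi(\boldsymbol{\theta}_2)| \leq \sigma \|\boldsymbol{\theta}_1 - \boldsymbol{\theta}_2\| + \varepsilon$, and let $\varepsilon \to 0$. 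Either route closes the proof cleanly.
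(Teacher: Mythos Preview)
Your proof is correct and follows essentially the same route as the paper: both bound $\phi(\boldsymbol{\theta}_1)-\phi(\boldsymbol{\theta}_2)$ by comparing through a common $\boldsymbol{\delta}$, apply the uniform Lipschitz bound on $\mathcal{L}(\cdot,\boldsymbol{\delta})$, and then invoke symmetry. The only cosmetic difference is that the paper writes the inequality for \emph{every} $\boldsymbol{\delta}\in\mathcal{K}$ and then takes the max (so it never needs to name a maximizer), whereas you first pick $\boldsymbol{\delta}_1^\star$ and later patch the existence issue via $\varepsilon$-suboptimal points; the underlying argument is identical.
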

\begin{proof}
    Since $\mathcal{L}(\boldsymbol{\theta}, \boldsymbol{\delta})$ is assumed to be Lipschitz for every $\boldsymbol{\delta} \in \mathcal{K}$, then for any input pair $\boldsymbol{\theta}_1, \boldsymbol{\theta}_2 \in \mathbb{R}^{m}$ we can write:
    \begin{equation}\label{eq:lipschitz}
        \mathcal{L}(\boldsymbol{\theta}_1, \boldsymbol{\delta}) \leq \mathcal{L}(\boldsymbol{\theta}_2, \boldsymbol{\delta}) + \sigma \norm{\boldsymbol{\theta}_1 - \boldsymbol{\theta}_2},
    \end{equation}
    where $\norm{\cdot}$ is a distance metric, and $\sigma$ is the Lipschitz constant.
    Now, by the definition of the max-function we can write:
    \begin{equation}\label{eq:max}
        \mathcal{L}(\boldsymbol{\theta}_2, \boldsymbol{\delta}) \leq \max _{\delta \in \mathcal{K}} \mathcal{L}(\boldsymbol{\theta}_2, \boldsymbol{\delta}) = \phi(\boldsymbol{\theta}_2).
    \end{equation}
    Plugging \Cref{eq:max} into \Cref{eq:lipschitz}, we then write:
    \begin{equation}\label{eq:lipschitz_max}
        \mathcal{L}(\boldsymbol{\theta}_1, \boldsymbol{\delta}) \leq \phi(\boldsymbol{\theta}_2) + \sigma \norm{\boldsymbol{\theta}_1 - \boldsymbol{\theta}_2}.
    \end{equation}
    Since \Cref{eq:lipschitz_max} holds for every $\boldsymbol{\delta} \in \mathcal{K}$, it means that $\phi(\boldsymbol{\theta}_1) = \max _{\delta \in \mathcal{K}} \mathcal{L}(\boldsymbol{\theta}_1, \boldsymbol{\delta})$ is also less than the RHS.
    In other words, we have:
    \begin{equation}
        \phi(\boldsymbol{\theta}_1) \leq \phi(\boldsymbol{\theta}_2) + \sigma \norm{\boldsymbol{\theta}_1- \boldsymbol{\theta}_2}.
    \end{equation}
    A similar inequality can be derived by switching $\boldsymbol{\theta}_1$ and $\boldsymbol{\theta}_2$.
    Thus, we can conclude:
    \begin{equation}
        \lvert\phi(\boldsymbol{\theta}_1)  - \phi(\boldsymbol{\theta}_2)\rvert \leq \sigma \norm{\boldsymbol{\theta}_1 - \boldsymbol{\theta}_2},
    \end{equation}
    and hence, $\phi(\boldsymbol{\theta})=\max _{\delta \in \mathcal{K}} \mathcal{L}(\boldsymbol{\theta}, \boldsymbol{\delta})$ is Lipschitz.
\end{proof}

\begin{lemma}\label{strongly_convex_lemma}
    Let $\mathcal{K}$ be a nonempty compact topological space, $\mathcal{L}: \mathbb{R}^{m} \times \mathcal{K} \rightarrow \mathbb{R}$ be such that $\mathcal{L}(\cdot, \boldsymbol{\delta})$ is $\mu$-strongly convex for every $\boldsymbol{\delta} \in \mathcal{K}$.
	Then, the corresponding max-function
	$$
	\phi(\boldsymbol{\theta})=\max _{\delta \in \mathcal{K}} \mathcal{L}(\boldsymbol{\theta}, \boldsymbol{\delta})
	$$
	is also $\mu$-strongly convex if for all $\boldsymbol{\theta} \in \mathbb{R}^{m}$ the set ${\boldsymbol{\delta}^{*}(\boldsymbol{\theta})=\left\{\boldsymbol{\delta} \in \arg \max _{\boldsymbol{\delta} \in \mathcal{K}} \mathcal{L}(\boldsymbol{\theta}, \boldsymbol{\delta})\right\}}$ is a singleton, i.e. $\boldsymbol{\delta}^{*}(\boldsymbol{\theta})=\left\{\boldsymbol{\delta}_{\boldsymbol{\theta}}^{*}\right\}$.
\end{lemma}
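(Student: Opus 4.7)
The plan is to exploit the first-order characterization of strong convexity together with Danskin's theorem (\Cref{danskin_theorem}), whose hypotheses are exactly what the singleton assumption here is designed to trigger. Recall that a differentiable function $\phi$ is $\mu$-strongly convex if and only if
\begin{equation}\nonumber
\phi(\boldsymbol{\theta}_2) \geq \phi(\boldsymbol{\theta}_1) + \langle \nabla \phi(\boldsymbol{\theta}_1), \boldsymbol{\theta}_2 - \boldsymbol{\theta}_1\rangle + \tfrac{\mu}{2}\|\boldsymbol{\theta}_2 - \boldsymbol{\theta}_1\|^2
\end{equation}
for all $\boldsymbol{\theta}_1, \boldsymbol{\theta}_2 \in \mathbb{R}^m$. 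My goal is to establish this inequality for $\phi(\boldsymbol{\theta}) = \max_{\boldsymbol{\delta}\in\mathcal{K}} \mathcal{L}(\boldsymbol{\theta}, \boldsymbol{\delta})$.

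First, I would invoke \Cref{danskin_theorem}: since $\boldsymbol{\delta}^{*}(\boldsymbol{\theta}) = \{\boldsymbol{\delta}^{*}_{\boldsymbol{\theta}}\}$ is a singleton for each $\boldsymbol{\theta}$, the max-function $\phi$ is differentiable with $\nabla\phi(\boldsymbol{\theta}) = \nabla_{\boldsymbol{\theta}} \mathcal{L}(\boldsymbol{\theta}, \boldsymbol{\delta}^{*}_{\boldsymbol{\theta}})$. Observe also that strong convexity implies ordinary convexity, so in particular each $\mathcal{L}(\cdot,\boldsymbol{\delta})$ is convex; and since the supremum of convex functions is convex, $\phi$ is convex (this is not strictly needed but is a helpful sanity check that puts us inside the usual strong-convexity framework).

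Next, fix arbitrary $\boldsymbol{\theta}_1, \boldsymbol{\theta}_2 \in \mathbb{R}^m$ and anchor everything at the unique maximizer $\boldsymbol{\delta}^{*}_{\boldsymbol{\theta}_1}$. Applying the first-order definition of $\mu$-strong convexity to $\mathcal{L}(\cdot, \boldsymbol{\delta}^{*}_{\boldsymbol{\theta}_1})$ gives
\begin{equation}\nonumber
\mathcal{L}(\boldsymbol{\theta}_2, \boldsymbol{\delta}^{*}_{\boldsymbol{\theta}_1}) \geq \mathcal{L}(\boldsymbol{\theta}_1, \boldsymbol{\delta}^{*}_{\boldsymbol{\theta}_1}) + \langle \nabla_{\boldsymbol{\theta}} \mathcal{L}(\boldsymbol{\theta}_1, \boldsymbol{\delta}^{*}_{\boldsymbol{\theta}_1}), \boldsymbol{\theta}_2 - \boldsymbol{\theta}_1\rangle + \tfrac{\mu}{2}\|\boldsymbol{\theta}_2 - \boldsymbol{\theta}_1\|^2.
\end{equation}
By definition of $\phi$ we have $\mathcal{L}(\boldsymbol{\theta}_1, \boldsymbol{\delta}^{*}_{\boldsymbol{\theta}_1}) = \phi(\boldsymbol{\theta}_1)$ and $\mathcal{L}(\boldsymbol{\theta}_2, \boldsymbol{\delta}^{*}_{\boldsymbol{\theta}_1}) \leq \phi(\boldsymbol{\theta}_2)$, and by Danskin $\nabla_{\boldsymbol{\theta}}\mathcal{L}(\boldsymbol{\theta}_1, \boldsymbol{\delta}^{*}_{\boldsymbol{\theta}_1}) = \nabla\phi(\boldsymbol{\theta}_1)$. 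Substituting these into the display yields the desired inequality for $\phi$.

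\textbf{Main obstacle.} The argument is short, so there is no real computational hurdle; the only subtlety is the choice of anchor. One must evaluate the strong-convexity inequality along the particular slice $\boldsymbol{\delta} = \boldsymbol{\delta}^{*}_{\boldsymbol{\theta}_1}$ (not $\boldsymbol{\delta}^{*}_{\boldsymbol{\theta}_2}$), because we need an equality at $\boldsymbol{\theta}_1$ so that the gradient term matches $\nabla \phi(\boldsymbol{\theta}_1)$ via Danskin, while at $\boldsymbol{\theta}_2$ it is enough to upper-bound by $\phi(\boldsymbol{\theta}_2)$. The singleton hypothesis is what allows this substitution of gradients; without it, we would have to work with a subgradient/directional-derivative variant and the clean first-order characterization would fail. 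This is the only place where the assumption is genuinely used.
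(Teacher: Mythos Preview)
Your proposal is correct and follows essentially the same approach as the paper: apply the first-order strong-convexity inequality to $\mathcal{L}(\cdot,\boldsymbol{\delta})$ at the unique maximizer, use $\mathcal{L}(\boldsymbol{\theta},\boldsymbol{\delta}) \le \phi(\boldsymbol{\theta})$ to pass to the max-function on one side, and invoke Danskin's theorem to identify the gradient term with $\nabla\phi$. The only cosmetic difference is that the paper anchors at $\boldsymbol{\delta}^{*}_{\boldsymbol{\theta}_2}$ and writes the inequality in the rearranged form $\nabla\phi(\boldsymbol{\theta}_2)^{\intercal}(\boldsymbol{\theta}_2-\boldsymbol{\theta}_1)\ge \phi(\boldsymbol{\theta}_2)-\phi(\boldsymbol{\theta}_1)+\tfrac{\mu}{2}\|\boldsymbol{\theta}_2-\boldsymbol{\theta}_1\|^2$, whereas you anchor at $\boldsymbol{\delta}^{*}_{\boldsymbol{\theta}_1}$; these are the same argument up to relabeling.
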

\begin{proof}
    $\mathcal{L}(\cdot, \boldsymbol{\delta})$ is $\mu$-strongly convex for every $\boldsymbol{\delta} \in \mathcal{K}$.
    From the definition of $\mu$-strongly convex functions we have:
    \begin{align}\label{mu_strongly_convex}
        {\nabla_{\boldsymbol{\theta}} \mathcal{L}\left(\boldsymbol{\theta}_2, \boldsymbol{\delta}\right)}^{\intercal}\left(\boldsymbol{\theta}_2 - \boldsymbol{\theta}_1\right) \geq \mathcal{L}\left(\boldsymbol{\theta}_2, \boldsymbol{\delta}\right) - \mathcal{L}\left(\boldsymbol{\theta}_1, \boldsymbol{\delta}\right) + \frac{\mu}{2}\norm{\boldsymbol{\theta}_2 - \boldsymbol{\theta}_1}^2.
    \end{align}
    By the definition of the max function, we know that $\mathcal{L}(\boldsymbol{\theta}_1, \boldsymbol{\delta}) \leq \max _{\delta \in \mathcal{K}} \mathcal{L}(\boldsymbol{\theta}_1, \boldsymbol{\delta}) = \phi(\boldsymbol{\theta}_1)$.
    Hence, we can replace this in \Cref{mu_strongly_convex} to get:
    \begin{align}\label{mu_strongly_convex_theta1}
        {\nabla_{\boldsymbol{\theta}} \mathcal{L}\left(\boldsymbol{\theta}_2, \boldsymbol{\delta}\right)}^{\intercal}\left(\boldsymbol{\theta}_2 - \boldsymbol{\theta}_1\right) \geq \mathcal{L}\left(\boldsymbol{\theta}_2, \boldsymbol{\delta}\right) - \phi(\boldsymbol{\theta}_1)  + \frac{\mu}{2}\norm{\boldsymbol{\theta}_2 - \boldsymbol{\theta}_1}^2.
    \end{align}
    Since \Cref{mu_strongly_convex_theta1} is set to hold for all $\boldsymbol{\delta} \in \mathcal{K}$, it also holds for $\boldsymbol{\delta}^{*}(\boldsymbol{\theta}_2)=\left\{\boldsymbol{\delta}_{\boldsymbol{\theta}_2}^{*}\right\}$.
    Setting $\boldsymbol{\delta} = \boldsymbol{\delta}_{\boldsymbol{\theta}_2}^{*}$ we get:
    \begin{align}\label{mu_strongly_convex_theta2}
        {\nabla_{\boldsymbol{\theta}} \mathcal{L}\left(\boldsymbol{\theta}_2, \boldsymbol{\delta}_{\boldsymbol{\theta}_2}^{*}\right)}^{\intercal}\left(\boldsymbol{\theta}_2 - \boldsymbol{\theta}_1\right) \geq \mathcal{L}\left(\boldsymbol{\theta}_2, \boldsymbol{\delta}_{\boldsymbol{\theta}_2}^{*}\right) - \phi(\boldsymbol{\theta}_1) + \frac{\mu}{2}\norm{\boldsymbol{\theta}_2 - \boldsymbol{\theta}_1}^2.
    \end{align}
    From the Danskin's theorem we know that ${\nabla_{\boldsymbol{\theta}}\phi(\boldsymbol{\theta}_2) = {\nabla_{\boldsymbol{\theta}} \mathcal{L}\left(\boldsymbol{\theta}_2, \boldsymbol{\delta}_{\boldsymbol{\theta}_2}^{*}\right)}}$.
    Also, since $\boldsymbol{\delta}_{\boldsymbol{\theta}_2}^{*}$ is the maximizer of $\mathcal{L}\left(\boldsymbol{\theta}_2, \boldsymbol{\delta}\right)$, thus $\mathcal{L}(\boldsymbol{\theta}_2, \boldsymbol{\delta}_{\boldsymbol{\theta}_2}^{*}) = \phi(\boldsymbol{\theta}_2)$.
    Replacing these terms in \Cref{mu_strongly_convex_theta2} we arrive at:
    \begin{equation}\label{mu_strongly_convex_gradient}
        {\nabla_{\boldsymbol{\theta}}\phi(\boldsymbol{\theta}_2)}^{\intercal}\left(\boldsymbol{\theta}_2 - \boldsymbol{\theta}_1\right) \geq \phi(\boldsymbol{\theta}_2) - \phi(\boldsymbol{\theta}_1) + \frac{\mu}{2}\norm{\boldsymbol{\theta}_2 - \boldsymbol{\theta}_1}^2,
    \end{equation}
    which means that $\phi(\boldsymbol{\theta})=\max _{\delta \in \mathcal{K}} \mathcal{L}(\boldsymbol{\theta}, \boldsymbol{\delta})$ is also $\mu$-strongly convex.
\end{proof}

Now that we have established our preliminary lemmas, we are ready to re-state our main theorem and prove it.

\begin{customthm}{1}[restated]\label{thm:convergence:rep}
Let $\boldsymbol{\gamma}^{t}$ and $\mathcal{S}^{t}$ denote the weights and subset derived by any \textbf{adversarial coreset selection} algorithm at iteration $t$ of the full gradient descent.
Also, let $\boldsymbol{\theta}^{*}$ be the optimal model parameters, $\mathcal{L}$ be a convex loss function with respect to $\boldsymbol{\theta}$, and that the parameters are bounded such that $\norm{\boldsymbol{\theta} - \boldsymbol{\theta}^{*}} \leq \Delta$.
Moreover, let us define the gradient approximation error at iteration $t$ with:
\begin{equation}\nonumber
    \Gamma(L, L_{\boldsymbol{\gamma}}, \boldsymbol{\gamma}^{t}, \mathcal{S}^{t}, \boldsymbol{\theta}_t) \coloneqq \norm{\nabla_{\boldsymbol{\theta}}L(\boldsymbol{\theta}_t) - \nabla_{\boldsymbol{\theta}}L^{\mathcal{S}^{t}}_{\boldsymbol{\gamma}^{t}}{(\boldsymbol{\theta}_t)}}.
\end{equation}

Then, for $t=0, 1, \cdots, T-1$ the following guarantees hold:

(1) For a Lipschitz continuous loss function $\mathcal{L}$ with parameter $\sigma$ and constant learning rate ${\alpha=\frac{\Delta}{\sigma \sqrt{T}}}$ we have:
\begin{align}\nonumber
    \min _{t=0: T-1} L(\boldsymbol{\theta}_{t})-L(\boldsymbol{\theta}^{*}) \leq \frac{\Delta \sigma}{\sqrt{T}} +\frac{\Delta}{T} \sum_{t=0}^{T-1} \Gamma(L, L_{\boldsymbol{\gamma}^t}, \boldsymbol{\gamma}^{t}, \mathcal{S}^{t}, \boldsymbol{\theta}_t).
\end{align}

(2) Moreover, for a Lipschitz continuous loss $\mathcal{L}$ with parameter $\sigma$ and strongly convex with parameter $\mu$, by setting a learning rate $\alpha_{t}=\frac{2}{n\mu(1+t)}$ we have:
\begin{align}\nonumber
\min _{t=0: T-1} L(\boldsymbol{\theta}_{t})-L(\boldsymbol{\theta}^{*}) \leq \frac{2 \sigma^{2}}{n\mu(T-1)} +\sum_{t=0}^{T-1} \frac{2 \Delta t}{T(T-1)} \Gamma(L, L_{\boldsymbol{\gamma}^t}, \boldsymbol{\gamma}^{t}, \mathcal{S}^{t}, \boldsymbol{\theta}_t),
\end{align}
where $n$ is the total number of training data.
\end{customthm}

\begin{proof}
We take a similar approach to that of \citet{killamsetty2021gradmatch} to prove these convergence guarantees.
In particular, we first derive a general relationship that relates the gradients to the optimal model parameters and then use the conditions of each part to simplify and get to the final result.

Using the gradient descent update rule in \Cref{eq:gradient_app_coreset}, we can re-write it as:
\begin{equation}\label{eq:gd_alternative}
    {\nabla_{\boldsymbol{\theta}}L_{\boldsymbol{\gamma}^{t}}(\boldsymbol{\theta}_{t})}^{\intercal}\left(\boldsymbol{\theta}_{t} - \boldsymbol{\theta}^{*}\right)
    = \frac{1}{\alpha_{t}}{\left(\boldsymbol{\theta}_{t} - \boldsymbol{\theta}_{t + 1}\right)}^{\intercal}\left(\boldsymbol{\theta}_{t} - \boldsymbol{\theta}^{*}\right).
\end{equation}
Using the identity $2 \boldsymbol{a}^{\intercal}\boldsymbol{b} = \norm{\boldsymbol{a}}^2 + \norm{\boldsymbol{b}}^2 - \norm{\boldsymbol{a} - \boldsymbol{b}}^2$, we can simplify \Cref{eq:gd_alternative} with:
\begin{align}\label{eq:gd_alternative_simplified}\nonumber
    &{\nabla_{\boldsymbol{\theta}}L_{\boldsymbol{\gamma}^{t}}(\boldsymbol{\theta}_{t})}^{\intercal}\left(\boldsymbol{\theta}_{t} - \boldsymbol{\theta}^{*}\right)\\\nonumber
    &= \frac{\norm{\boldsymbol{\theta}_{t} - \boldsymbol{\theta}_{t + 1}}^{2} + \norm{\boldsymbol{\theta}_{t} - \boldsymbol{\theta}^{*}}^{2} - \norm{\boldsymbol{\theta}_{t+1} - \boldsymbol{\theta}^{*}}^{2}}{2\alpha_{t}}\\
    &= \frac{\norm{\alpha_{t} \nabla_{\boldsymbol{\theta}}L_{\boldsymbol{\gamma}^{t}}(\boldsymbol{\theta}_{t})}^{2} + \norm{\boldsymbol{\theta}_{t} - \boldsymbol{\theta}^{*}}^{2} - \norm{\boldsymbol{\theta}_{t+1} - \boldsymbol{\theta}^{*}}^{2}}{2\alpha_{t}},
\end{align}
where we replaced $\boldsymbol{\theta}_{t} - \boldsymbol{\theta}_{t + 1}$ with $\alpha_{t} \nabla_{\boldsymbol{\theta}}L_{\boldsymbol{\gamma}^{t}}(\boldsymbol{\theta}_{t})$ using the gradient descent update rule from \Cref{eq:gradient_descent_coreset}.
Now, we can re-write the LHS of \Cref{eq:gd_alternative_simplified} as:
\begin{align}\nonumber
    &{\nabla_{\boldsymbol{\theta}}L_{\boldsymbol{\gamma}^{t}}(\boldsymbol{\theta}_{t})}^{\intercal}\left(\boldsymbol{\theta}_{t} - \boldsymbol{\theta}^{*}\right)\\\nonumber
    &\qquad= {\nabla_{\boldsymbol{\theta}}L_{\boldsymbol{\gamma}^{t}}(\boldsymbol{\theta}_{t})}^{\intercal}\left(\boldsymbol{\theta}_{t} - \boldsymbol{\theta}^{*}\right)\cdots \\\nonumber
    \quad &\cdots- {\nabla_{\boldsymbol{\theta}}L(\boldsymbol{\theta}_{t})}^{\intercal}\left(\boldsymbol{\theta}_{t} - \boldsymbol{\theta}^{*}\right) + {\nabla_{\boldsymbol{\theta}}L(\boldsymbol{\theta}_{t})}^{\intercal}\left(\boldsymbol{\theta}_{t} - \boldsymbol{\theta}^{*}\right),
\end{align}
by adding and subtracting the full gradient ${\nabla_{\boldsymbol{\theta}}L(\boldsymbol{\theta}_{t})}$.
Keeping ${\nabla_{\boldsymbol{\theta}}L(\boldsymbol{\theta}_{t})}^{\intercal}\left(\boldsymbol{\theta}_{t} - \boldsymbol{\theta}^{*}\right)$ on the LHS, we move the remaining two terms to the RHS of \Cref{eq:gd_alternative_simplified}.
Thus, we get:
\begin{align}\nonumber\label{eq:gd_base}
    &{\nabla_{\boldsymbol{\theta}}L(\boldsymbol{\theta}_{t})}^{\intercal}\left(\boldsymbol{\theta}_{t} - \boldsymbol{\theta}^{*}\right)\\\nonumber
    &=\frac{\norm{\alpha_{t} \nabla_{\boldsymbol{\theta}}L_{\boldsymbol{\gamma}^{t}}(\boldsymbol{\theta}_{t})}^{2} + \norm{\boldsymbol{\theta}_{t} - \boldsymbol{\theta}^{*}}^{2} - \norm{\boldsymbol{\theta}_{t+1} - \boldsymbol{\theta}^{*}}^{2}}{2\alpha_{t}} \cdots\\
    &\quad\cdots + \left(\nabla_{\boldsymbol{\theta}}L(\boldsymbol{\theta}_{t}) - \nabla_{\boldsymbol{\theta}}L_{\boldsymbol{\gamma}^{t}}(\boldsymbol{\theta}_{t}) \right)^{\intercal}\left(\boldsymbol{\theta}_{t} - \boldsymbol{\theta}^{*}\right).
\end{align}

Assuming a constant learning rate $\alpha_t = \alpha$, we can sum both sides of \Cref{eq:gd_base} for all values ${t=0, 1, \dots, T-1}$.
We then get:
\begin{align}\nonumber
    \sum_{t=0}^{T-1}&{\nabla_{\boldsymbol{\theta}}L(\boldsymbol{\theta}_{t})}^{\intercal}\left(\boldsymbol{\theta}_{t} - \boldsymbol{\theta}^{*}\right)\\\nonumber
    &= \frac{\norm{\boldsymbol{\theta}_{0} - \boldsymbol{\theta}^{*}}^{2} - \norm{\boldsymbol{\theta}_{T} - \boldsymbol{\theta}^{*}}^{2}}{2\alpha}\cdots\\\nonumber
    \quad\cdots&+ \sum_{t=0}^{T-1}\frac{\alpha}{2}\norm{ \nabla_{\boldsymbol{\theta}}L_{\boldsymbol{\gamma}^{t}}(\boldsymbol{\theta}_{t})}^{2}\\
    \quad\cdots&+ \sum_{t=0}^{T-1}\left(\nabla_{\boldsymbol{\theta}}L(\boldsymbol{\theta}_{t}) - \nabla_{\boldsymbol{\theta}}L_{\boldsymbol{\gamma}^{t}}(\boldsymbol{\theta}_{t}) \right)^{\intercal}\left(\boldsymbol{\theta}_{t} - \boldsymbol{\theta}^{*}\right),
\end{align}
where the first two terms on the RHS are due to summation of a telescoping series.
Using the $\ell_2$ norm properties, we know that ${\norm{\boldsymbol{\theta}_{T} - \boldsymbol{\theta}^{*}}^{2} \geq 0}$.
Hence, we can drop this term and replace the equation with an inequality to get
\begin{align}\nonumber\label{eq:gd_sum_constant_lr}
    \sum_{t=0}^{T-1}{\nabla_{\boldsymbol{\theta}}L(\boldsymbol{\theta}_{t})}^{\intercal}\left(\boldsymbol{\theta}_{t} - \boldsymbol{\theta}^{*}\right)
    &\leq \frac{\norm{\boldsymbol{\theta}_{0} - \boldsymbol{\theta}^{*}}^{2}}{2\alpha}
    + \sum_{t=0}^{T-1}\frac{\alpha}{2}\norm{ \nabla_{\boldsymbol{\theta}}L_{\boldsymbol{\gamma}^{t}}(\boldsymbol{\theta}_{t})}^{2} \cdots\\
    \quad\cdots&+ \sum_{t=0}^{T-1}\left(\nabla_{\boldsymbol{\theta}}L(\boldsymbol{\theta}_{t}) - \nabla_{\boldsymbol{\theta}}L_{\boldsymbol{\gamma}^{t}}(\boldsymbol{\theta}_{t}) \right)^{\intercal}\left(\boldsymbol{\theta}_{t} - \boldsymbol{\theta}^{*}\right).
\end{align}
Until this point, our assumptions are pretty general about the loss function $\mathcal{L}$.
Next, we are going to derive convergence bounds for different choices of the loss function $\mathcal{L}$.

\bmhead{Convex and Lipschitz Continuous Loss Function $\mathcal{L}$}
As the first item, let $\mathcal{L}$ be a continuous loss function that is both convex and Lipschitz continuous with constant $\sigma$.

From the Danskin's theorem~(\Cref{danskin_theorem}), we know that if $\mathcal{L}(\boldsymbol{\theta}; \tilde{\boldsymbol{x}})$ is convex in $\boldsymbol{\theta}$ for every $\tilde{\boldsymbol{x}}$, then $\max_{\tilde{\boldsymbol{x}}} \mathcal{L}(\boldsymbol{\theta}; \tilde{\boldsymbol{x}})$ is also going to be convex.
As such, using \Cref{eq:emp_loss} we can conclude that $L(\boldsymbol{\theta})$ is also convex.
This is true since $L(\boldsymbol{\theta})$ is a summation of convex functions.
Because $L(\boldsymbol{\theta})$ is convex, we can then write:
\begin{equation}\label{convex_L}
    L(\boldsymbol{\theta}_t) - L(\boldsymbol{\theta}^{*}) \leq {\nabla_{\boldsymbol{\theta}}L(\boldsymbol{\theta}_{t})}^{\intercal}\left(\boldsymbol{\theta}_{t} - \boldsymbol{\theta}^{*}\right).
\end{equation}
Considering \Cref{eq:gd_sum_constant_lr,convex_L}, we next conclude that:
\begin{align}\label{eq:gd_convex_L}
    \sum_{t=0}^{T-1}L(\boldsymbol{\theta}_t) - L(\boldsymbol{\theta}^{*})
    &\leq \frac{\norm{\boldsymbol{\theta}_{0} - \boldsymbol{\theta}^{*}}^{2}}{2\alpha}
    + \sum_{t=0}^{T-1}\frac{\alpha}{2}\norm{ \nabla_{\boldsymbol{\theta}}L_{\boldsymbol{\gamma}^{t}}(\boldsymbol{\theta}_{t})}^{2}\cdots\\
    \cdots&+ \sum_{t=0}^{T-1}\left(\nabla_{\boldsymbol{\theta}}L(\boldsymbol{\theta}_{t}) - \nabla_{\boldsymbol{\theta}}L_{\boldsymbol{\gamma}^{t}}(\boldsymbol{\theta}_{t}) \right)^{\intercal}\left(\boldsymbol{\theta}_{t} - \boldsymbol{\theta}^{*}\right).
\end{align}
From \Cref{lipschitz_lemma} we know that if $\mathcal{L}(\boldsymbol{\theta}; \tilde{\boldsymbol{x}})$ is Lipschitz with constant $\sigma$, then so does $\max_{\tilde{\boldsymbol{x}}} \mathcal{L}(\boldsymbol{\theta}; \tilde{\boldsymbol{x}})$.
Moreover, by Danskin's theorem we have that $\max_{\tilde{\boldsymbol{x}}} \mathcal{L}(\boldsymbol{\theta}; \tilde{\boldsymbol{x}})$ is both convex and differentiable.
Hence, we can conclude that:
\begin{equation}\label{eq:lipschitz_gradient}
    \norm{\nabla_{\boldsymbol{\theta}} \max_{\tilde{\boldsymbol{x}}} \mathcal{L}(\boldsymbol{\theta}; \tilde{\boldsymbol{x}})} \leq \sigma.
\end{equation}
Using this, we can conclude from \Cref{eq:emp_loss_coreset} that:
\begin{align}\nonumber\label{eq:coreset_gradient_lipschitz}
        \norm{\nabla_{\boldsymbol{\theta}} L_{\boldsymbol{\gamma}^{t}}{(\boldsymbol{\theta}_{t})}} &= \norm{\sum_{j \in \mathcal{S}^{t}} \gamma^{t}_{j} \nabla_{\boldsymbol{\theta}} \max_{\tilde{\boldsymbol{x}}_j} \mathcal{L}(\boldsymbol{\theta}_{t}; \tilde{\boldsymbol{x}}_j)}\\\nonumber
        &\stackrel{(1)}{\leq} \sum_{j \in \mathcal{S}^{t}} \gamma^{t}_{j} \norm{\nabla_{\boldsymbol{\theta}} \max_{\tilde{\boldsymbol{x}}_j} \mathcal{L}(\boldsymbol{\theta}_{t}; \tilde{\boldsymbol{x}}_j)}\\
        &\stackrel{(2)}{\leq} \sum_{j \in \mathcal{S}^{t}} \gamma^{t}_{j} \sigma,
\end{align}
where (1) is the result of the triangle inequality, and (2) is derived using \Cref{eq:lipschitz_gradient}.
We further assume that the gradients as well as the weights are normalized such that $\sum_{j \in \mathcal{S}^{t}} \gamma^{t}_{j} = 1$.
Hence, we can plug \Cref{eq:coreset_gradient_lipschitz} into \Cref{eq:gd_convex_L} to get:
\begin{align}\nonumber\label{eq:gd_convex_lipschitz_L}
    \sum_{t=0}^{T-1}L(\boldsymbol{\theta}_t) - L(\boldsymbol{\theta}^{*})&\stackrel{(1)}{\leq} \frac{\norm{\boldsymbol{\theta}_{0} - \boldsymbol{\theta}^{*}}^{2}}{2\alpha}
    + \frac{T \alpha \sigma^2}{2} + \sum_{t=0}^{T-1}\left(\nabla_{\boldsymbol{\theta}}L(\boldsymbol{\theta}_{t}) - \nabla_{\boldsymbol{\theta}}L_{\boldsymbol{\gamma}^{t}}(\boldsymbol{\theta}_{t}) \right)^{\intercal}\left(\boldsymbol{\theta}_{t} - \boldsymbol{\theta}^{*}\right)\\
    &\stackrel{(2)}{\leq} \frac{\norm{\boldsymbol{\theta}_{0} - \boldsymbol{\theta}^{*}}^{2}}{2\alpha}
    + \frac{T \alpha \sigma^2}{2} + \sum_{t=0}^{T-1}\norm{\nabla_{\boldsymbol{\theta}}L(\boldsymbol{\theta}_{t}) - \nabla_{\boldsymbol{\theta}}L_{\boldsymbol{\gamma}^{t}}(\boldsymbol{\theta}_{t}) }\norm{\boldsymbol{\theta}_{t} - \boldsymbol{\theta}^{*}},
\end{align}
where (1) is deduced using \Cref{eq:coreset_gradient_lipschitz}, and (2) is derived using the Cauchy-Schwarz inequality.
From our assumptions we have $\norm{\boldsymbol{\theta} - \boldsymbol{\theta}^{*}} \leq \Delta$.
As such, we re-write \Cref{eq:gd_convex_lipschitz_L} using $\norm{\boldsymbol{\theta} - \boldsymbol{\theta}^{*}} \leq \Delta$ and dividing both sides by $T$:
\begin{align}\label{eq:gd_convex_lipschitz_L_2}
    \frac{1}{T}\sum_{t=0}^{T-1}L(\boldsymbol{\theta}_t) - L(\boldsymbol{\theta}^{*})\leq \frac{\Delta ^ 2}{2 \alpha T}
    + \frac{\alpha \sigma^2}{2} + \frac{\Delta}{T}\sum_{t=0}^{T-1}\norm{\nabla_{\boldsymbol{\theta}}L(\boldsymbol{\theta}_{t}) - \nabla_{\boldsymbol{\theta}}L_{\boldsymbol{\gamma}^{t}}(\boldsymbol{\theta}_{t})}.
\end{align}
Since
$${\min_{t=0: T-1} L(\boldsymbol{\theta}_{t})-L(\boldsymbol{\theta}^{*}) \leq \frac{1}{T}\sum_{t=0}^{T-1}L(\boldsymbol{\theta}_t) - L(\boldsymbol{\theta}^{*})},$$
we can get:
\begin{align}\label{eq:gd_convex_lipschitz_L_3}
    \min_{t=0: T-1} L(\boldsymbol{\theta}_{t})-L(\boldsymbol{\theta}^{*})
    \leq \frac{\Delta ^ 2}{2 \alpha T} + \frac{\alpha \sigma^2}{2} + \frac{\Delta}{T}\sum_{t=0}^{T-1}\Gamma(L, L_{\boldsymbol{\gamma}}, \boldsymbol{\gamma}^{t}, \mathcal{S}^{t}, \boldsymbol{\theta}_t).
\end{align}
And finally, we can show that $\alpha^2 = \frac{\Delta ^2}{\sigma ^ 2 T}$ minimizes the addition of the first two terms.
Hence, by plugging $\alpha = \frac{\Delta}{\sigma \sqrt{T}}$ we get our first result:
\begin{align}\label{eq:final_gd_convex_lipschitz_L}
    \min_{t=0: T-1} L(\boldsymbol{\theta}_{t}) -L(\boldsymbol{\theta}^{*})
    \leq \frac{\Delta \sigma}{\sqrt{T}} + \frac{\Delta}{T}\sum_{t=0}^{T-1}\Gamma(L, L_{\boldsymbol{\gamma}}, \boldsymbol{\gamma}^{t}, \mathcal{S}^{t}, \boldsymbol{\theta}_t).
\end{align}

\bmhead{Lipschitz Continuous and Strongly Convex Loss Function $\mathcal{L}$}
Since the loss function $\mathcal{L}(\boldsymbol{\theta}; \tilde{\boldsymbol{x}})$ is $\mu$-strongly convex in $\boldsymbol{\theta}$ for every $\tilde{\boldsymbol{x}}$, from \Cref{strongly_convex_lemma} we know that $\max_{\tilde{\boldsymbol{x}}} \mathcal{L}(\boldsymbol{\theta}; \tilde{\boldsymbol{x}})$ is also going to be $\mu$-strongly convex.
Thus, using the additive property of strongly convex functions, we conclude that ${L(\boldsymbol{\theta}) = \sum_{i \in V} \max_{\tilde{\boldsymbol{x}}_i} \mathcal{L}(\boldsymbol{\theta}; \tilde{\boldsymbol{x}}_i)}$ is strongly convex.
It is straightforward to show that if $\max_{\tilde{\boldsymbol{x}}_i} \mathcal{L}(\boldsymbol{\theta}; \tilde{\boldsymbol{x}}_i)$ is $\mu$-strongly convex, then their summation $L(\boldsymbol{\theta})$ is strongly convex with parameter $\mu_n = n\mu$, where $n$ is the number of terms in $L(\boldsymbol{\theta})$.

Using the conclusions above, from the definition of strongly convex functions, we can write:
\begin{equation}\label{mu_strongly_convex_L}
    {\nabla_{\boldsymbol{\theta}}L(\boldsymbol{\theta}_t)}^{\intercal}\left(\boldsymbol{\theta}_t - \boldsymbol{\theta}^{*}\right) \geq L(\boldsymbol{\theta}_t) - L(\boldsymbol{\theta}^{*}) + \frac{\mu_n}{2}\norm{\boldsymbol{\theta}_t - \boldsymbol{\theta}^{*}}^2.
\end{equation}
Putting \Cref{eq:gd_base,mu_strongly_convex_L} together, we can deduce:
\begin{align}\nonumber\label{eq:gd_base_strongly_convex_L}
    L(\boldsymbol{\theta}_t) - L(\boldsymbol{\theta}^{*})
    &\leq \frac{\norm{\alpha_{t} \nabla_{\boldsymbol{\theta}}L_{\boldsymbol{\gamma}^{t}}(\boldsymbol{\theta}_{t})}^{2} + \norm{\boldsymbol{\theta}_{t} - \boldsymbol{\theta}^{*}}^{2} - \norm{\boldsymbol{\theta}_{t+1} - \boldsymbol{\theta}^{*}}^{2}}{2\alpha_{t}}\cdots\\\nonumber
    \cdots&+ \left(\nabla_{\boldsymbol{\theta}}L(\boldsymbol{\theta}_{t}) - \nabla_{\boldsymbol{\theta}}L_{\boldsymbol{\gamma}^{t}}(\boldsymbol{\theta}_{t}) \right)^{\intercal}\left(\boldsymbol{\theta}_{t} - \boldsymbol{\theta}^{*}\right) \cdots \\\cdots &- \frac{\mu_n}{2}\norm{\boldsymbol{\theta}_t - \boldsymbol{\theta}^{*}}^2.
\end{align}
Now, let us set the learning rate $\alpha_t = \frac{2}{\mu_n (t+1)}$.
We have:
\begin{align}\nonumber\label{eq:gd_base_strongly_convex_L_lr}
    L(\boldsymbol{\theta}_t) - L(\boldsymbol{\theta}^{*})
    &\leq \frac{1}{\mu_n (t+1)}\norm{ \nabla_{\boldsymbol{\theta}}L_{\boldsymbol{\gamma}^{t}}(\boldsymbol{\theta}_{t})}^{2}\cdots\\\nonumber
    \cdots&+ \frac{\mu_n (t+1)}{4}\norm{\boldsymbol{\theta}_{t} - \boldsymbol{\theta}^{*}}^{2} \cdots\\\nonumber
    \cdots&- \frac{\mu_n (t+1)}{4}\norm{\boldsymbol{\theta}_{t+1} - \boldsymbol{\theta}^{*}}^{2} \cdots\\\nonumber
    \cdots&+ \left(\nabla_{\boldsymbol{\theta}}L(\boldsymbol{\theta}_{t}) - \nabla_{\boldsymbol{\theta}}L_{\boldsymbol{\gamma}^{t}}(\boldsymbol{\theta}_{t}) \right)^{\intercal}\left(\boldsymbol{\theta}_{t} - \boldsymbol{\theta}^{*}\right) \cdots\\
    \cdots&- \frac{\mu_n}{2}\norm{\boldsymbol{\theta}_t - \boldsymbol{\theta}^{*}}^2.
\end{align}
Rearranging \Cref{eq:gd_base_strongly_convex_L_lr}, we can multiply both sides by a factor $t \geq 0$ to get:
\begin{align}\nonumber\label{eq:gd_base_strongly_convex_L_lr_t}
    t\left(L(\boldsymbol{\theta}_t) - L(\boldsymbol{\theta}^{*})\right)
    &\leq \frac{t}{\mu_n (t+1)}\norm{ \nabla_{\boldsymbol{\theta}}L_{\boldsymbol{\gamma}^{t}}(\boldsymbol{\theta}_{t})}^{2}\cdots\\\nonumber
    \cdots&+ \frac{\mu_n t(t-1)}{4}\norm{\boldsymbol{\theta}_{t} - \boldsymbol{\theta}^{*}}^{2}\cdots\\\nonumber
    \cdots&- \frac{\mu_n t(t+1)}{4}\norm{\boldsymbol{\theta}_{t+1} - \boldsymbol{\theta}^{*}}^{2}\\
    \cdots&+ t\left(\nabla_{\boldsymbol{\theta}}L(\boldsymbol{\theta}_{t}) - \nabla_{\boldsymbol{\theta}}L_{\boldsymbol{\gamma}^{t}}(\boldsymbol{\theta}_{t}) \right)^{\intercal}\left(\boldsymbol{\theta}_{t} - \boldsymbol{\theta}^{*}\right).
\end{align}
Since we assume that $\mathcal{L}(\boldsymbol{\theta}; \tilde{\boldsymbol{x}})$ is Lipschitz continuous with constant $\sigma$, from \Cref{lipschitz_lemma} we know that $\max_{\tilde{\boldsymbol{x}}} \mathcal{L}(\boldsymbol{\theta}; \tilde{\boldsymbol{x}})$ is also Lipschitz continuous with parameter $\sigma$.
Thus, similar to \Cref{eq:coreset_gradient_lipschitz} we can conclude that:
\begin{equation}\label{eq:coreset_gradient_lipschitz_rep}
        \norm{\nabla_{\boldsymbol{\theta}} L_{\boldsymbol{\gamma}^{t}}{(\boldsymbol{\theta}_{t})}} \leq \sum_{j \in \mathcal{S}^{t}} \gamma^{t}_{j} \sigma=\sigma,
\end{equation}
where once again we assume that the gradients are normalized so that $\sum_{j \in \mathcal{S}^{t}} \gamma^{t}_{j} = 1$.
Replacing \Cref{eq:coreset_gradient_lipschitz_rep} into \Cref{eq:gd_base_strongly_convex_L_lr_t}, we have:
\begin{align}\nonumber\label{eq:gd_base_strongly_convex_L_lr_t_lipschitz}
    t\left(L(\boldsymbol{\theta}_t) - L(\boldsymbol{\theta}^{*})\right)
    &\leq \frac{t\sigma^{2}}{\mu_n (t+1)} + \frac{\mu_n t(t-1)}{4}\norm{\boldsymbol{\theta}_{t} - \boldsymbol{\theta}^{*}}^{2} \cdots \\\nonumber  \cdots&- \frac{\mu_n t(t+1)}{4}\norm{\boldsymbol{\theta}_{t+1} - \boldsymbol{\theta}^{*}}^{2}\cdots\\
    \cdots&+ t\left(\nabla_{\boldsymbol{\theta}}L(\boldsymbol{\theta}_{t}) - \nabla_{\boldsymbol{\theta}}L_{\boldsymbol{\gamma}^{t}}(\boldsymbol{\theta}_{t}) \right)^{\intercal}\left(\boldsymbol{\theta}_{t} - \boldsymbol{\theta}^{*}\right).
\end{align}
Using the Cauchy-Schwarz inequality, we also write:
\begin{align}\nonumber\label{eq:gd_base_strongly_convex_L_lr_t_lipschitz_cauchy}
    t\left(L(\boldsymbol{\theta}_t) - L(\boldsymbol{\theta}^{*})\right)
    &\leq \frac{t\sigma^{2}}{\mu_n (t+1)}
    + \frac{\mu_n t(t-1)}{4}\norm{\boldsymbol{\theta}_{t} - \boldsymbol{\theta}^{*}}^{2}  \cdots \\\nonumber  \cdots&- \frac{\mu_n t(t+1)}{4}\norm{\boldsymbol{\theta}_{t+1} - \boldsymbol{\theta}^{*}}^{2}\cdots\\
    \cdots&+ t\norm{\nabla_{\boldsymbol{\theta}}L(\boldsymbol{\theta}_{t}) - \nabla_{\boldsymbol{\theta}}L_{\boldsymbol{\gamma}^{t}}(\boldsymbol{\theta}_{t}) }\norm{\boldsymbol{\theta}_{t} - \boldsymbol{\theta}^{*}}.
\end{align}
Adding all terms for $t = 0, 1, \dots, T-1$, we get:
\begin{align}\nonumber\label{eq:strongly_convex_bound_sum}
    \sum_{t=0}^{T-1} t\left(L(\boldsymbol{\theta}_t) - L(\boldsymbol{\theta}^{*})\right)
    & \leq \sum_{t=0}^{T-1}\frac{t\sigma^{2}}{\mu_n (t+1)}\cdots\\\nonumber
    \cdots&+ \sum_{t=0}^{T-1}\frac{\mu_n t(t-1)}{4}\norm{\boldsymbol{\theta}_{t} - \boldsymbol{\theta}^{*}}^{2}\\\nonumber \cdots &- \sum_{t=0}^{T-1}\frac{\mu_n t(t+1)}{4}\norm{\boldsymbol{\theta}_{t+1} - \boldsymbol{\theta}^{*}}^{2}\\
    \cdots&+ \sum_{t=0}^{T-1} t\norm{\nabla_{\boldsymbol{\theta}}L(\boldsymbol{\theta}_{t}) - \nabla_{\boldsymbol{\theta}}L_{\boldsymbol{\gamma}^{t}}(\boldsymbol{\theta}_{t}) }\norm{\boldsymbol{\theta}_{t} - \boldsymbol{\theta}^{*}}.
\end{align}
For the first sum we write:
\begin{align}\nonumber\label{eq:sum_1}
    \sum_{t=0}^{T-1}\frac{t\sigma^{2}}{\mu_n (t+1)} &= \frac{\sigma^{2}}{\mu_n}\sum_{t=0}^{T-1}\frac{t}{(t+1)}\\\nonumber
    &= \frac{\sigma^{2}}{\mu_n}\sum_{t=0}^{T-1}\left(1-\frac{1}{t+1}\right)\\\nonumber
    &\leq \frac{\sigma^{2}}{\mu_n}\sum_{t=0}^{T-1} 1\\
    &= \frac{\sigma^{2}T}{\mu_n}
\end{align}
The second and third terms on the RHS form a telescoping sum.
That is, defining ${a_{t} = \frac{\mu_n t(t-1)}{4}\norm{\boldsymbol{\theta}_{t} - \boldsymbol{\theta}^{*}}^{2}}$ we have:
\begin{align}\nonumber\label{eq:sum_2}
    \sum_{t=0}^{T-1}\frac{\mu_n t(t-1)}{4}\norm{\boldsymbol{\theta}_{t} - \boldsymbol{\theta}^{*}}^{2}- \sum_{t=0}^{T-1}\frac{\mu_n t(t+1)}{4}\norm{\boldsymbol{\theta}_{t+1} - \boldsymbol{\theta}^{*}}^{2}
    &= \sum_{t=0}^{T-1} \left(a_{t} - a_{t+1}\right) \\\nonumber
    &= a_{0} - a_{T}\\
    &= -\frac{\mu_n T(T-1)}{4}\norm{\boldsymbol{\theta}_{T} - \boldsymbol{\theta}^{*}}^{2}.
\end{align}
Combining \Cref{eq:strongly_convex_bound_sum,eq:sum_1,eq:sum_2}, we deduce:
\begin{align}\nonumber\label{eq:strongly_convex_bound_sum_post}
    \sum_{t=0}^{T-1} t\left(L(\boldsymbol{\theta}_t) - L(\boldsymbol{\theta}^{*})\right) &\leq \frac{\sigma^{2}T}{\mu_n} -\frac{\mu_n T(T-1)}{4}\norm{\boldsymbol{\theta}_{T} - \boldsymbol{\theta}^{*}}^{2}\cdots\\
    \cdots&+ \sum_{t=0}^{T-1} t\norm{\nabla_{\boldsymbol{\theta}}L(\boldsymbol{\theta}_{t}) - \nabla_{\boldsymbol{\theta}}L_{\boldsymbol{\gamma}^{t}}(\boldsymbol{\theta}_{t}) }\norm{\boldsymbol{\theta}_{t} - \boldsymbol{\theta}^{*}}.
\end{align}
Using our assumption $0\leq\norm{\boldsymbol{\theta} - \boldsymbol{\theta}^{*}} \leq \Delta$, we then have:
\begin{align}\label{eq:strongly_convex_bound_prelim}
    \sum_{t=0}^{T-1} t\left(L(\boldsymbol{\theta}_t) - L(\boldsymbol{\theta}^{*})\right) \leq \frac{\sigma^{2}T}{\mu_n} + \sum_{t=0}^{T-1} t\Delta\norm{\nabla_{\boldsymbol{\theta}}L(\boldsymbol{\theta}_{t}) - \nabla_{\boldsymbol{\theta}}L_{\boldsymbol{\gamma}^{t}}(\boldsymbol{\theta}_{t})}.
\end{align}
To simplify the LHS of \Cref{eq:strongly_convex_bound_prelim}, note that $\min_{t=0: T-1} L(\boldsymbol{\theta}_{t})-L(\boldsymbol{\theta}^{*}) \leq L(\boldsymbol{\theta}_t) - L(\boldsymbol{\theta}^{*})$.
Thus, we can have:
\begin{align}\nonumber\label{eq:strongly_convex_bound_LHS}
    \sum_{t=0}^{T-1} t&\left(L(\boldsymbol{\theta}_t) - L(\boldsymbol{\theta}^{*})\right) \\\nonumber
    &\geq \sum_{t=0}^{T-1} t\left(\min_{t=0: T-1} L(\boldsymbol{\theta}_{t})-L(\boldsymbol{\theta}^{*})\right)\\\nonumber
    &\geq\left(\min_{t=0: T-1} L(\boldsymbol{\theta}_{t})-L(\boldsymbol{\theta}^{*})\right)\sum_{t=0}^{T-1} t\\
    &\geq\left(\min_{t=0: T-1} L(\boldsymbol{\theta}_{t})-L(\boldsymbol{\theta}^{*})\right)\left(\frac{T(T-1)}{2}\right).
\end{align}
After putting \Cref{eq:strongly_convex_bound_LHS} into \Cref{eq:strongly_convex_bound_prelim} and rearranging the terms, we will finally arrive at:
\begin{align}\label{eq:strongly_convex_bound_final}
    \min_{t=0: T-1} L(\boldsymbol{\theta}_{t})-L(\boldsymbol{\theta}^{*}) \leq \frac{2\sigma^{2}}{\mu_n (T-1)} + \sum_{t=0}^{T-1} \frac{2\Delta t}{T(T-1)}\Gamma(L, L_{\boldsymbol{\gamma}}, \boldsymbol{\gamma}^{t}, \mathcal{S}^{t}, \boldsymbol{\theta}_t).
\end{align}
\end{proof}

\section{Implementation Details}\label{ap:sec:imp_det}

This section provides the details of our experiments in \Cref{sec:experiments}.
We used a single NVIDIA Tesla V100-SXM2-16GB GPU for CIFAR-10~\citep{krizhevsky2009learning} and SVHN~\citep{netzer2011reading}, a single NVIDIA Tesla V100-SXM2-32GB GPU for ImageNet-12~\citep{liu2020refool}, and a single NVIDIA A100 for ImageNet-100~\citep{russakovsky2015imagenet}.
Our implementation is released at this \href{repository}{https://github.com/hmdolatabadi/ACS}.\footnote{\textcolor{revision}{Note that the exact training time reported in our experiments may vary between different devices, their drivers, the version of libraries used, etc. Nevertheless, the relative time difference and speed-up between training with the entire data and its subsets using adversarial coreset selection is going to be similar.}}

\subsection{Training Settings}
\textcolor{revision}{\Cref{tab:hyperI,tab:hyperII} show the entire set of hyper-parameters and settings used for training the models of \Cref{sec:experiments}.}

\subsection{Evaluation Settings}
For the evaluation of TRADES and $\ell_p$-PGD adversarial training, we use PGD attacks.
In particular, for TRADES and $\ell_\infty$-PGD adversarial training, we use $\ell_\infty$-PGD attacks with $\varepsilon=8/255$, step-size $\alpha=1/255$, 50 iterations, and 10 random restarts.
\textcolor{revision}{The only exception is ImageNet-100, where we evaluate the models using $\ell_\infty$-PGD with $\varepsilon=4/255$ and step-size $\alpha=2/255$.}
Also, for $\ell_2$-PGD adversarial training we use $\ell_2$-PGD attacks with $\varepsilon=80/255$, step-size $\alpha=8/255$, 50 iterations and 10 random restarts.

For Perceptual Adversarial Training~(PAT), we report the attacks' settings in \Cref{tab:hyper_PAT}.
Note that we chose the same set of unseen/seen attacks to evaluate each case.
However, since we trained our model with slightly different $\varepsilon$ bounds, we changed the attacks' settings accordingly.

\begin{table*}[tb!]
	\caption{Hyper-parameters of attacks used for the evaluation of PAT models in \Cref{sec:experiments}.}
	\label{tab:hyper_PAT}
	\begin{center}
		\begin{small}
			\setlength\tabcolsep{1.6pt}
			\begin{tabular}{cccc}
				\toprule
				\textbf{\scriptsize{Dataset}}
				& \textbf{Attack}
				& \textbf{Bound}
				& \textbf{Iterations} \\
				\midrule
				\parbox[t]{2mm}{\multirow{6}{*}{\rotatebox[origin=c]{90}{CIFAR-10}}}
				& AutoAttack-$\ell_2$~\citep{croce2020autoattack}                   & $1$	           & $20$ \\
				& AutoAttack-$\ell_\infty$~\citep{croce2020autoattack}              & $8/255$	       & $20$ \\
				& StAdv~\citep{xiao2018stadv}                                       & $0.02$	       & $50$ \\
				& ReColor~\citep{laidlaw2019recolor}                                & $0.06$	       & $100$ \\
				& PPGD~\citep{laidlaw2021pat}                                       & $0.40$	       & $40$ \\
				& LPA~\citep{laidlaw2021pat}                                        & $0.40$	       & $40$ \\                     
				\midrule
				\parbox[t]{2mm}{\multirow{7}{*}{\rotatebox[origin=c]{90}{ImageNet-12}}}
				& AutoAttack-$\ell_2$~\citep{croce2020autoattack}                   & $1200/255$	   & $20$ \\
				& AutoAttack-$\ell_\infty$~\citep{croce2020autoattack}              & $4/255$	       & $20$ \\
				& JPEG~\citep{kang2019JPEG}                                         & $0.125$	       & $200$ \\
				& StAdv~\citep{xiao2018stadv}                                       & $0.02$	       & $50$ \\
				& ReColor~\citep{laidlaw2019recolor}                                & $0.06$	       & $200$ \\
				& PPGD~\citep{laidlaw2021pat}                                       & $0.35$	       & $40$ \\
				& LPA~\citep{laidlaw2021pat}                                        & $0.35$	       & $40$ \\
				\bottomrule
			\end{tabular}
		\end{small}
	\end{center}
\end{table*}

\begin{sidewaystable*}[p!]
	\caption{Training details for experimental results of \Cref{sec:experiments}.}
	\label{tab:hyperI}
	\begin{center}
		\begin{scriptsize}
			\begin{tabular}{lcccccc}
				\toprule
				\multirow{2}{*}{\textbf{Hyperparameter}}    & \multicolumn{6}{c}{\textbf{Experiment}}\\
				\cmidrule{2-7}
				& TRADES                                    & $\ell_\infty$-PGD                & $\ell_2$-PGD                    & Fast-LPA                   & Fast-LPA                  & Fast Adv.\\
				\midrule
				\textbf{Dataset}                            & CIFAR-10                         & CIFAR-10                        & SVHN                       & CIFAR-10                  & ImageNet-12 & CIFAR-10\\
				\textbf{Model Arch.}                        & ResNet-18                        & ResNet-18                       & ResNet-18                  & ResNet-50                 & ResNet50    & ResNet-18\\
				\midrule
				\textbf{Optimizer}                          & SGD                              & SGD                             & SGD                        & SGD                       & SGD         & SGD\\
				\textbf{Scheduler}                          & Multi-step                       & Multi-step                      & Multi-step                 & Multi-step                & Multi-step  & Multi-step\\
				\textbf{Initial lr.}                        & $0.1$                            & $0.01$                          & $0.1$                      & $0.1$                     & $0.1$       & $0.1$\\
				\textbf{lr. Decay (epochs)}                 & $0.1$ ($75$, $90$)               & $0.1$ ($80$, $100$)             & $0.1$ ($75$, $90$, $100$)  & $0.1$ ($75$, $90$, $100$) & $0.1$ ($45$, $60$, $80$)) & $0.1$ ($37$, $56$)\\
				\textbf{Weight Decay}                       & $2\cdot10^{-4}$                  & $5\cdot10^{-4}$                 & $5\cdot10^{-4}$            & $2\cdot10^{-4}$           & $2\cdot10^{-4}$ & $5\cdot10^{-4}$\\
				\textbf{Batch Size (full)}                  & $128$                            & $128$                           & $128$                      & $50$                      & $50$      & $128$\\
				\textbf{Total Epochs}                       & $100$                            & $120$                           & $120$                      & $120$                     & $90$      & $60$\\
				\midrule
				\textbf{Coreset Size}                       & $50$\%                           & $50$\%                          & $30$\%                     & $40$\%                     & $50$\%   & $50$\%\\
				\textbf{Coreset Batch Size}                 & $20$                             & $20$                            & $20$                       & $20$                       & $20$     & $20$\\
				\textbf{Warm-start Epochs}                  & $30$                             & $36$                            & $22$                       & $29$                       & $27$     & $22$\\
				\textbf{Coreset Selection Freq. (epochs)}  & $20$                              & $20$                            & $20$                       & $10$                       & $15$     & $5$\\
				\midrule
				\textbf{Visual Similarity Measure}            & $\ell_\infty$                  & $\ell_\infty$                   & $\ell_2$                   & LPIPS (AlexNet) & LPIPS (AlexNet) & $\ell_\infty$\\
				\textbf{$\varepsilon$ (Bound on Visual Sim.)} & $8/255$                        & $8/255$                         & $80/255$                   & $0.5$                      & $0.25$   & $8/255$\\
				\textbf{Attack Iterations (Training)}         & $10$                           & $10$                            & $10$                       & $10$                       & $10$     & $1$\\
				\textbf{Attack Iterations (Coreset Selection)}& $10$                           & $1$                             & $10$                       & $10$                       & $10$     & $1$\\
				\textbf{Attack Step-size}                     & $1.785/255$                     & $1.25/255$                      & $8/255$                   & -                          & -        & $10/255$\\      
				\bottomrule
			\end{tabular}
		\end{scriptsize}
	\end{center}
\end{sidewaystable*}

\begin{sidewaystable*}[p!]
	\caption{\textcolor{revision}{Training details for experimental results of \Cref{sec:experiments}.}}
	\label{tab:hyperII}
	\begin{center}
		\begin{scriptsize}
			\begin{tabular}{lcccccc}
				\toprule
				\multirow{2}{*}{\textbf{Hyperparameter}}    & \multicolumn{4}{c}{\textbf{Experiment}}\\
				\cmidrule{2-6}
				                                            & TRADES                          & $\ell_\infty$-PGD               & $\ell_2$-PGD                & $\ell_\infty$-PGD  & TRADES\\
				\midrule
				\textbf{Dataset}                            & CIFAR-10                         & CIFAR-10                        & SVHN                       & ImageNet-100       & ImageNet-100\\
				\textbf{Model Arch.}                        & WideResNet-28-10                 & WideResNet-28-10                & WideResNet-28-10           & ResNet-34          & ResNet-50\\
				\midrule
				\textbf{Optimizer}                          & SGD                              & SGD                             & SGD                        & SGD                & SGD\\
				\textbf{Scheduler}                          & Multi-step                       & Multi-step                      & Multi-step                 & Multi-step         & Multi-step\\
				\textbf{Initial lr.}                        & $0.1$                            & $0.1$                           & $0.1$                      & $0.1$              & $0.1$\\
				\textbf{lr. Decay (epochs)}                 & $0.1$ ($75$, $90$)               & $0.1$ ($80$, $100$)             & $0.1$ ($75$, $90$, $100$)  & $0.1$ ($20$, $40$) & $0.1$ ($30$, $60$)\\
				\textbf{Weight Decay}                       & $2\cdot10^{-4}$                  & $5\cdot10^{-4}$                 & $5\cdot10^{-4}$            & $2\cdot10^{-4}$    & $2\cdot10^{-4}$\\
				\textbf{Batch Size (full)}                  & $128$                            & $128$                           & $128$                      & $128$              & $256$\\
				\textbf{Total Epochs}                       & $100$                            & $120$                           & $120$                      & $50$               & $90$\\
				\midrule
				\textbf{Coreset Size}                       & $50$\%                           & $50$\%                          & $30$\%                     & $56$\%             & $50$\%\\
				\textbf{Coreset Batch Size}                 & $128$                            & $128$                           & $128$                      & $128$              & $256$\\
				\textbf{Warm-start Epochs}                  & $30$                             & $36$                            & $22$                       & $25$               & $27$\\
				\textbf{Coreset Selection Freq. (epochs)}   & $20$                             & $20$                            & $20$                       & $3$                & $15$ \\
				\midrule
				\textbf{Visual Similarity Measure}            & $\ell_\infty$                  & $\ell_\infty$                   & $\ell_2$                   & $\ell_\infty$       & $\ell_\infty$\\
				\textbf{$\varepsilon$ (Bound on Visual Sim.)} & $8/255$                        & $8/255$                         & $80/255$                   & $4/255$             & $4/255$\\
				\textbf{Attack Iterations (Training)}         & $10$                           & $10$                            & $10$                       & $2$                 & $2$\\
				\textbf{Attack Iterations (Coreset Selection)}& $10$                           & $1$                             & $10$                       & $1$                 & $1$\\
				\textbf{Attack Step-size}                     & $1.785/255$                    & $1.25/255$                      & $8/255$                    & $2/255$             & $2/255$\\      
				\bottomrule
			\end{tabular}
		\end{scriptsize}
	\end{center}
\end{sidewaystable*}

\section{Extended Experimental Results}\label{ap:extended}
\Cref{tab:LPA} shows the full details of our experiments on PAT~\citep{laidlaw2021pat}.
In each case, we train ResNet-50~\citep{he2016deep} classifiers using LPIPS~\citep{zhang2018lpips} objective of PAT~\citep{laidlaw2021pat}.
In each dataset, all the training hyper-parameters are fixed.
The only difference is that we enable adversarial coreset selection for our method.
During inference, we evaluate each trained model against a few unseen attacks, as well as two variants of Perceptual Adversarial Attacks~\citep{laidlaw2021pat} that the models are trained initially on.
As can be seen, adversarial coreset selection can significantly reduce the training time while experiencing only a small reduction in the average robust accuracy.

\begin{sidewaystable*}[p!]
	\caption{Clean and robust accuracy (\%), and total training time (mins) of Perceptual Adversarial Training for CIFAR-10 and ImageNet-12 datasets. The training objective uses Fast Lagrangian Perceptual Attack (LPA)~\citep{laidlaw2021pat} to train the network. At test time, the networks are evaluated against attacks not seen during training and different versions of Perceptual Adversarial Attack (PPGD and LPA). The unseen attacks were selected similar to \citet{laidlaw2021pat} in each case. Please see the \Cref{ap:sec:imp_det} for more information about the settings.}
	\label{tab:LPA}
	\begin{center}
		\begin{scriptsize}
			\setlength\tabcolsep{5pt}
			\def\arraystretch{2.5}
			\begin{tabular}{cccccccccccc}
				\toprule
				\parbox[t]{2mm}{\multirow{2}{*}{\rotatebox[origin=c]{90}{\textbf{\scriptsize{Dataset}}}}}
				&\multirow{2}{*}{\textbf{Training}}
				& \multirow{2}{*}{\textbf{Clean}}
				& \multicolumn{6}{c}{\textbf{Unseen Attacks}}
				& \multicolumn{2}{c}{\textbf{Seen Attacks}} 
				& \multirow{2}{*}{\shortstack{\textbf{Train. Time}\\(mins)}}\\
				\cmidrule{4-9} \cmidrule{10-11}
				& & & Auto-$\ell_2$   & Auto-$\ell_\infty$      & JPEG    & StAdv
				& ReColor   & Mean    & PPGD     & LPA      & \\
				\midrule
				\parbox[t]{2mm}{\multirow{3}{*}{\rotatebox[origin=c]{90}{\scriptsize{\textbf{CIFAR-10}}}}}
				& Adv. \textsc{Craig} (Ours)             & $\mathbf{83.21}$        & $39.98$	& $33.94$	       &  -           & $49.60$	 & $62.69$	  & $\mathbf{46.55}$  & $19.56$	& $7.42$       & $\mathbf{767.34}$ \\
				& Adv. \textsc{GradMatch} (Ours)         & $\mathbf{83.14}$	       & $39.20$	& $34.11$	       &  -           & $48.86$	 & $62.26$	  & $\mathbf{46.11}$   & $19.94$	& $7.54$   & $\mathbf{787.26}$ \\
				& Full PAT (Fast-LPA)                    & $\mathbf{86.02}$	       & $43.27$	& $37.96$	       &  -           & $48.68$	 & $62.23$	  & $\mathbf{48.04}$   & $22.62$	& $8.01$   & $\mathbf{1682.94}$ \\
				\midrule
				\parbox[t]{2mm}{\multirow{3}{*}{\rotatebox[origin=c]{90}{\scriptsize{\textbf{ImageNet-12}}}}}
				& Adv. \textsc{Craig} (Ours)             & $\mathbf{86.99}$        & $51.54$	& $60.42$	       & $71.79$      & $37.47$	 & $44.04$	  & $\mathbf{53.05}$    & $29.04$    & $14.07$   & $\mathbf{2817.06}$ \\
				& Adv. \textsc{GradMatch} (Ours)         & $\mathbf{87.08}$	       & $51.38$	& $60.64$          & $72.15$      & $35.83$	 & $45.83$	  & $\mathbf{53.17}$    & $28.36$    & $13.11$   & $\mathbf{2865.72}$ \\
				& Full PAT (Fast-LPA)                    & $\mathbf{91.22}$	       & $57.37$    & $66.89$          & $76.25$      & $19.29$	 & $46.35$	  & $\mathbf{53.23}$    & $33.17$    & $13.49$   & $\mathbf{5613.12}$ \\
				\bottomrule
			\end{tabular}
		\end{scriptsize}
	\end{center}
\end{sidewaystable*}

\end{appendices}

\clearpage
\newpage
\bmhead{Data Availability Statement}
All the datasets used in this paper are publicly available.

\bmhead{Code Availability Statement}
Our implementation can be found in this repository: \href{tinyurl.com/33edneuf}{https://github.com/hmdolatabadi/ACS}.

\begin{small}
\bibliographystyle{sn-basic}
\bibliography{references}
\end{small}

\end{document}